\newtheorem{theorem}{Theorem}
\newtheorem{lemma}{Lemma}
\newtheorem{corollary}{Corollary}
\theoremstyle{remark}
\newtheorem{remark}{Remark}
\newtheorem{example}{Example}
\newcommand{\poly}{\mathrm{poly}}
\def\cO{\mathcal{O}}
\def\simple{\mathsf{S}}
\def\cum{\mathsf{C}}
\def\tw{\mathsf{tw}}
\newcommand{\citep}{\cite}
\begin{document}

%

%

\twocolumn[

\aistatstitle{Stochastic Zeroth-order Optimization in High Dimensions}

\aistatsauthor{ Yining Wang\;\;\;\; Simon S. Du\;\;\;\; Sivaraman Balakrishnan\;\;\;\; Aarti Singh }

\aistatsaddress{ Carnegie Mellon University } ]

\begin{abstract}
  We consider the problem of optimizing a high-dimensional convex function
  using stochastic zeroth-order queries.
  Under sparsity assumptions on the gradients or function values, 
  we present 
  two algorithms: a successive component/feature selection algorithm and a noisy mirror descent algorithm
  using Lasso gradient estimates, 
  and show that both algorithms have convergence rates that depend only logarithmically
  on the ambient  dimension of the problem.
  Empirical results confirm our theoretical findings and show that the algorithms we design outperform classical zeroth-order optimization
  methods in the high-dimensional setting.
\end{abstract}

\section{INTRODUCTION}

We consider the problem of \emph{stochastic zeroth order optimization}, where one wishes to compute the minimizer of a 
 function $f:\mathcal X\to\mathbb R$ defined on a known $d$-dimensional domain $\mathcal X\subseteq\mathbb R^d$.
 In the stochastic zeroth-order optimization setting
 the target function $f$ is unknown, and we obtain information about $f$ only through \emph{noisy} function evaluations at $T$ adaptively chosen
 points $x_1,\ldots,x_T\in\mathcal X$. At each query point $x_t$ we observe $y_t$ where, 
 \begin{equation}
 y_t = f(x_t) + \xi_t,
 \label{eq:model}
 \end{equation}
 and $\xi_t$ represents stochastic (zero-mean) noise.

 
 The (stochastic) zeroth-order optimization problem is a classical problem in optimization, machine learning, statistics, and related fields,
 and is also known as \emph{derivative-free optimization} or \emph{black-box optimization}.
 Examples include applications where gradients are difficult to evaluate and/or communicate (e.g., distributed learning and 
 parameter optimization of complicated decision processes),
 and where the function $f$ is itself unknown or inaccessible
 such as hyper-parameter tuning in machine learning and search for optimal parameters in experimental or simulation studies
 \citep{snoek2012practical,reeja2012microwave,nakamura2017design,leeds2014exploration}.

 The main focus of this paper is to understand the (convex) stochastic zeroth-order optimization problem
 in \emph{high-dimensional} scenarios,
 where the dimension of the function to be optimized ($d$) is very large and may far exceed the sample budget $T$ allowed.
 Compared to the classical stochastic \emph{first-order} optimization setting,  high dimensionality poses unique challenges in 
 the zeroth-order query model (\ref{eq:model}).
{For example, if first-order information is available (exact or noisy) and the gradient of the function $f$ is Lipschitz continuous with respect to the Euclidean distance,
the iteration complexity of the classical (stochastic) gradient descent algorithm is independent of dimension $d$;
on the other hand, the paper \cite{jamieson2012query} establishes an information-theoretic lower bound for the zeroth-order optimization problem
showing that, under the same Lipschitz continuous gradient assumptions, any zeroth-order optimization algorithm requires sample complexity growing polynomially with
the dimension $d$.
In addition, classical zeroth-order optimization algorithms such as the local averaging method \citep{flaxman2005online,agarwal2010optimal} 
have variance scaling linearly with $d$ and are not directly feasible in the high-dimensional setting.
Motivated by these observations and by real-world applications we introduce additional sparsity assumptions that enable tractable zeroth-order optimization in high dimensions.

 We propose two methods for high-dimensional zeroth-order optimization: the first method 
 uses a few noisy samples to select a small subset of ``important variables'' $S\subseteq[d]$; afterwards, existing low-dimensional zeroth-order optimization techniques are
 applied
 to obtain a minimizer of $f$ restricted to $S$.
 We also propose a different method that combines stochastic mirror descent \citep{lan2012optimal,nemirovski2009robust,duchi2010composite}
  and de-biased Lasso gradient estimates \citep{javanmard2014confidence,van2014asymptotically,zhang2014confidence}. This stochastic mirror descent based 
  method 
requires weaker assumptions and is seen 
  to outperform the variable selection based method in simulations.

  \subsection{Related work}

We conclude this section with a discussion of related works. 
The zeroth-order optimization problem and its extension to bandit convex optimization 
have been extensively studied in the machine learning and optimization literature.
The paper \cite{flaxman2005online} considers a locally smoothed surrogate of $f$ whose gradients can be unbiasedly estimated under the zeroth-order query model (\ref{eq:model})
and provides sub-linear regret bounds for the bandit convex optimization problem;
the bounds in this setting were later improved by \cite{agarwal2010optimal,shamir2013complexity,hazan2014bandit} under additional smoothness and strong convexity assumptions.
Using techniques beyond gradient-based optimization, \cite{jamieson2012query,agarwal2013stochastic,bubeck2016kernel} achieved tight dependency on the sample-budget
$T$, often obtaining worse dependency on the  dimension $d$.

There is a rich literature on sparse (high-dimensional) optimization in the classical optimization setup, 
where the gradients of the objective function $f$ can be exactly or approximately (unbiasedly) computed,
such as by sampling when the objective is a finite sum \citep{johnson2013accelerating}.
\emph{Mirror descent} \citep{nemirovski1983problem} is the classical approach for optimization with non-standard geometry
and has been applied to problems with $\ell_1$, sparsity or simplex constraints \citep{beck2003mirror,agarwal2012information,shalev2011stochastic,lan2012optimal,nemirovski2009robust,ghadimi2012optimal}.
Alternative methods such as coordinate descent \citep{shwartz2010trading} and the homotopy method \citep{xiao2013proximal} were developed to achieve faster convergence.
We remark that the first-order settings, even with noisy/stochastic gradient oracles, are very different from zeroth-order optimization because in first-order optimization
the approximate gradient estimation is usually assumed to be \emph{unbiased} with respect to the gradient of the original function, which is generally 
not possible in zeroth-order settings.

Bayesian optimization \citep{snoek2012practical} considers the same problem of optimizing an unknown function through zeroth-order query points.
Typically, in Bayesian optimization the objective function $f$ is not assumed to be convex, and the convergence rate generally scales \emph{exponentially} with problem dimension $d$ \citep{bull2011convergence,scarlett2017lower}.

The papers \cite{bandeira2012computation,bandeira2014convergence} consider the zeroth-order optimization problem 
and apply compressed sensing and sparse recovery techniques to estimate both gradients and Hessians,
and incorporate these in a trust-region algorithm.
While the high-level ideas are similar, our algorithms are gradient-based because recovery of Hessian matrices are not always computationally desirable,
especially in the high-dimensional settings.
Furthermore, no explicit finite-sample convergence rates were established in \cite{bandeira2012computation,bandeira2014convergence}. 

Lasso and $\ell_1$-penalized methods have seen great success in the fields of sparse signal recovery and high-dimensional statistical estimation \citep{tibshirani1996regression,candes2006robust,donoho2006compressed}.
Theoretical properties of Lasso such as $\ell_p$ error bounds and model selection consistency are well understood \citep{knight2000asymptotics,zhao2006model,bickel2009simultaneous,wainwright2009sharp,raskutti2011minimax}.
Recently, there has been growing interest in ``de-biasing'' the Lasso estimator in order to build component-wise confidence intervals \citep{javanmard2014confidence,zhang2014confidence,van2014asymptotically}.
We build on such de-biasing procedures in order to obtain improved rates of convergence in sparse zeroth-order optimization problems.

\section{PROBLEM SETUP} 
In this section we introduce some notation that we use throughout the paper before formally introducing 
the structural assumptions we work under and the loss metric we consider.

\subsection{Additional notation}
We write $f(n)\lesssim g(n)$ or $f(n)=\cO(g(n))$ if there exists a constant $C>0$ such that $|f(n)|\leq C|g(n)|$ for all $n\in\mathbb N$. We use $f(n) \asymp g(n)$ if
$f(n)\lesssim g(n)$ and $g(n)\lesssim f(n)$.
We also use $\widetilde\cO(\cdot)$ to suppress poly-logarithmic dependency on $n$ or $d$.

 For $1\leq p\leq\infty$, the $\ell_p$ norm of a vector $x\in\mathbb R^d$ is defined as $\|x\|_p := (\sum_{i=1}^d{|x_i|^p})^{1/p}$ for $p<\infty$, and $\|x\|_{\infty} := \max_{1\leq i\leq d}|x_i|$ for $p=\infty$.
 For two vectors $x,y\in\mathbb R^d$, the inner product $\langle\cdot,\cdot\rangle$ is defined as $\langle x,y\rangle := \sum_{i=1}^d{x_iy_i}$.
%
 A univariate random variable $X$ is \emph{sub-Gaussian} with parameter $\nu^2$ if $\mathbb E[\exp(a(X-\mathbb EX))]\leq \exp\{\nu^2a^2/2\}$ for all $a\in\mathbb R$.
 A $d$-dimensional random vector $X$ is sub-Gaussian with parameter $\nu^2$ if $\langle X-\mathbb EX,a\rangle$ is sub-Gaussian with parameter $\nu^2\|a\|_2^2$ for all $a\in\mathbb R^d$.
A random variable $X$ is \emph{sub-exponential} with parameters $(\nu,\alpha)$ if $\mathbb E[\exp(a(X-\mathbb EX))]\leq \exp\{\nu^2a^2/2\}$ for all $|a|\leq 1/\alpha$.
 
 \subsection{Assumptions and evaluation measures}

 We make the following assumptions on the target function $f:\mathcal X\to\mathbb R$ to be optimized:
 \begin{enumerate}
 \item[A1] (\emph{Unconstrained convex optimization}): We take $\mathcal X=\mathbb R^d$ and assume that 
 $f$ is convex, i.e. for all $x,x'\in\mathcal X$ and $\lambda\in[0,1]$, 
 $f(\lambda x+(1-\lambda)x')\leq \lambda f(x)+(1-\lambda)f(x')$.
 \item[A2] (\emph{Minimizer of bounded $\ell_1$-norm}): We assume there exists $x^*\in\mathcal X$ such that $f(x^*)=f^*=\inf_{x\in\mathcal X}f(x)$ and $\|x^*\|_1\leq B$;
 $x^*$ does not have to be unique.
 \item[A3] (\emph{Sparsity of gradients}): We assume that $f$ is differentiable and that there exist $H>0$, $s\ll d$ such that 
 $$
 \|\nabla f(x)\|_0\leq s, \;\;\|\nabla f(x)\|_1\leq H, \;\;\;\;\forall x\in\mathcal X,
 $$
 where $\|z\|_0$ and $\|z\|_1$ are the $\ell_0$ and $\ell_1$ vector norms;
 the support of $\nabla f(x)$ could potentially vary with $x\in\mathcal X$.
 \item[A4] (\emph{Weak sparsity of Hessians}): We assume that 
 $f$ is twice differentiable and there exists $H>0$ such that 
 $$
 \|\nabla^2 f(x)\|_1\leq H, \;\;\;\;\forall x\in\mathcal X,
 $$
 where $\|A\|_1 :=\sum_{i,j=1}^d{|A_{ij}|}$ is the entry-wise $\ell_1$ norm of matrix $A$.
 \end{enumerate}

 
 (A3) and (A4) are key assumptions in our paper, which assumes the gradients of $f$ are \emph{sparse},
 and places a weaker sparsity assumption on the Hessian matrices that constrains their $\ell_1$ norm rather than $\ell_0$ norm.
 
 We also note that, assuming $\|\nabla f(x)\|_{\infty}$ and $\|\nabla^2 f(x)\|_{\infty}$ are both bounded, both (A3) and (A4) are implied
 by the following stronger but more intuitive ``function sparsity'' assumption:
 \begin{enumerate}
 \item[A5] \emph{(Function sparsity)}: there exists $S\subseteq[d]$, $|S|\leq s$ and $f_S:\mathbb R^{|S|}\to\mathbb R$ such that $f(x)\equiv f_S(x_S)$,
 where $x_S\in\mathbb R^{|S|}$ is the restriction of $x\in\mathbb R^d$ on $S$.
 \end{enumerate}
 
 We motivate Assumptions (A3), (A4) and (A5) from both theoretical and practical perspectives.
 Theoretically, the sparsity assumption allows us to estimate the gradient 
 at a specific point using $n\ll d$ noisy zeroth-order queries.
 On the other hand, (A5) is at least approximately satisfied in many practical applications of zeroth-order optimization.
 For example, in hyper-parameter tuning problems of learning systems, it is usually the case that the performance of the system is insensitive
 to some hyper-parameters, essentially implying the sparsity of the gradients and Hessians.
Other examples include the optimization of visual stimuli so that certain types of neural responses are maximized
or optimizing experimental parameters (pressure, temperature, etc.) so that the resulting synthesized material has optimal 
quality \citep{reeja2012microwave,nakamura2017design}.
For the visual stimuli optimization example, it is well known that the hierarchical organization of the human visual system in the brain into regions such as V1, V4, LO, IT etc. is precisely based on the neural response in these regions being sensitive to specific subsets of low-level and higher-level features such as edges and curves. This 
in turn implies that the underlying function to be optimized satisfies (A5).
 Finally, we remark that similar sparsity assumptions have been considered in past work \citep{bandeira2012computation,lei2017doubly} to obtain improved rates of convergence for  optimization methods.


 \emph{Evaluation measures}: 
Let $T$ be the number of queries an algorithm $\mathcal A$ is allowed to make in the model (\ref{eq:model}),
and denote by $x_1,\ldots,x_T \in \mathcal X$ the points at which $\mathcal A$ makes queries, before producing
a final estimate $x_{T+1}$. 
 The performance of an optimization algorithm $\mathcal A$ can be measured in two ways:
 \begin{itemize}[leftmargin=*]
 \item[-] {simple regret} $R^\simple_{\mathcal A}(T) := f(x_{T+1}) - f^*$;
 \item[-] {cumulative regret} $R^{\cum}_{\mathcal A}(T) := \frac{1}{T}\sum_{t=1}^T{f(x_t)}-f^*$.
 \end{itemize}
 The simple regret $R^{\simple}_{\mathcal A}(T)$ coincides with the classical definition of optimization error and depends only on $x_{T+1}$, while the cumulative regret $R^{\cum}_{\mathcal A}(T)$
 (used extensively in online learning problems) is also affected by the quality of intermediate query points $\{x_t\}_{t=1}^T$.
 Note that both $R^\simple_{\mathcal A}(T)$ and $R^{\cum}_{\mathcal A}(T)$ are random variables, with randomness in measurement error $\{\xi_t\}_{t=1}^T$
 and the intrinsic randomness in $\mathcal A$.
 Finally, we remark that the simple regret can always be upper bounded by the cumulative regret for convex problems, since for any algorithm $\mathcal A$ that has small $R^{\cum}_{\mathcal A}(T)$, taking $x_{T+1} = \frac{1}{T}\sum_{t=1}^T{x_t}$ achieves a simple regret $R^{\simple}_{\mathcal A}(T)\leq R^{\cum}_{\mathcal A}(T)$.

\section{LASSO GRADIENT ESTIMATION}

\begin{algorithm}[t]
\KwFunction{$\textsc{GradientEstimate}(x_t,n,\delta,\lambda)$.}
Sample i.i.d.~Rademacher $z_1,\ldots,z_n\in\{1,-1\}^d$\;
Observe $\widetilde y_i=y_i/\delta$, where $y_i=f(x_t+\delta z_i)+\xi_i$\;
Let $(\widehat g_t,\widehat\mu_t)$ be the solution to Eq.~(\ref{eq:lasso})\;
\KwOutput{the Lasso gradient estimate $\widehat g_t$ and $\widehat\mu_t$.}
\vskip 0.2in
\caption{Lasso gradient estimate}
\label{alg:lasso}
\end{algorithm}

\begin{algorithm*}[t]
\KwInput{sample budget $T$, parameters $\eta,\delta,\lambda$, sparsity level $s$, minimizer norm upper bound $B$}
\textbf{Initialization}: $x_0=0$, $T'=\lfloor T/2s\rfloor$; $\widehat S_0=\emptyset$, $\widehat S_{-1}\neq\emptyset$, $t=0$; 
$\widetilde{\mathcal X}=\{x\in\mathcal X:\|x\|_1\leq B\}$\;
\While{$|\widehat S_t|<s$ and $t< s$ and $\widehat S_t\neq\widehat S_{t-1}$} {
$t\gets t+1$\;
Gradient estimation: $\widehat g_t \gets \textsc{GradientEstimate}(x_{t-1},T',\delta,\lambda)$\;
Thresholding: $\widehat S_t \gets \widehat S_{t-1}\cup\{i\in[d]: |[\widehat g_t]_i|\geq \eta\}$\;
Run finite-difference algorithm from \citep{flaxman2005online} on $f_{\widehat S_t}$ with $T'$ queries, feasible region $\widetilde{\mathcal X}$ and starting point $x_{t-1}$;
suppose the output is $x_t$\;
}
\KwOutput{$x_{T+1}=x_t$ if $|\widehat S_t|=s$ and $x_{t-1}$ otherwise.}
\vskip 0.2in
\caption{The successive component selection algorithm}
\label{alg:egs}
\end{algorithm*}

In this section we introduce the Lasso gradient estimator, which plays a central role in both our algorithms.
More specifically, for any $x_t\in\mathcal X$, the Lasso gradient estimator uses $n\ll d$ samples to estimate the unknown gradient
$g_t := \nabla f(x_t)$.
The high-level idea is to consider $n\ll d$ random samples near the point $x_t$, and to then formulate the gradient estimation problem
as a biased linear regression system.
The Lasso procedure (and its variants) can then be applied to obtain a consistent estimator under certain sparsity assumptions on $\{g_t\}_{t=1}^T$.

Fix an arbitrary $x_t\in\mathcal X$ and let $z_1,\ldots,z_n\in\{\pm 1\}^d$ be $n$ samples of i.i.d.~binary random vectors such that $\Pr[z_{ij}=1]=\Pr[z_{ij}=-1]=1/2$,
where $i\in[n]$ and $j\in[d]$.
Let $\delta >0$ be a probing parameter which will be specified later, and
$y_1=f(x_t+\delta z_1)+\xi_1, \ldots, y_n=f(x_t+\delta z_n)+\xi_n$ be the $n$ observations~\eqref{eq:model} under random perturbations (scaled by $\delta$) 
$z_1,\ldots,z_n$ of $x_t$.
Using first-order Taylor expansions with Lagrangian remainders, the normalized $\widetilde y_i:=y_i/\delta$ can be written as
\begin{align}
\widetilde y_i &= \frac{f(x_t+\delta  z_i)+\xi_i}{\delta }\nonumber\\
&= \delta ^{-1}f(x_t) + g_t^\top z_i + \frac{\delta }{2}z_i^\top H_t(\kappa_i, z_i)z_i + \delta ^{-1}\xi_i\nonumber\\
&:= \mu_t + g_t^\top z_i + \varepsilon_i,
\label{eq:gradient-model}
\end{align}
where $\mu_t =\delta ^{-1}f(x_t)$, $\varepsilon_i = \frac{\delta }{2}z_i^\top H_t(\kappa_i,z_i)z_i + \delta ^{-1}\xi_i$ and
$H_t(\kappa_i, z_i)=\nabla^2 f(x_t+\kappa_i\delta z_i)$ for some $\kappa_i\in(0,1)$.

Eq.~(\ref{eq:gradient-model}) shows that, essentially, the question of estimating $g_t=\nabla f(x_t)$ can be cast as 
a linear regression model with design $\{z_i\}_{i=1}^n$, unknown parameters $(\mu_t,g_t)\in\mathbb R^{d+1}$ and noise variables
$\{\varepsilon_i\}_{i=1}^n$ whose bias (i.e., $\mathbb E[\varepsilon_i|z_i,x_t]$) goes to 0 as $\delta \to 0$, at the expense of increasing variance.
Since $g_t$ is a sparse vector as a consequence of (A3), one can use the Lasso \citep{tibshirani1996regression} to obtain an estimate of $g_t$ and $\mu_t$:
\begin{multline}
(\widehat{g}_t, \widehat{\mu}_t) = \underset{g\in\mathbb R^d,\mu\in\mathbb R}{\text{arg min}} \frac{1}{n}\sum_{i=1}^n{(\widetilde y_i-g^\top z_i-\mu)^2} +  \\
~~~~~~~~~~~~~~~~~~~~~~~~~~\lambda\|g\|_1 + \lambda|\mu|,
\label{eq:lasso}
\end{multline}
where $\lambda>0$ is a regularization parameter that will be specified later. A pseudocode description of the Lasso gradient estimator is given in Algorithm \ref{alg:lasso}.
The following lemma shows that with a carefully chosen $\lambda$, $\widehat g_t$ is a good estimate of $g_t$ in both $\ell_{\infty}$ and $\ell_1$ norms.
\begin{lemma}
Suppose (A1) through (A4) hold.
Suppose also that $n=\Omega(s^2\log d)$, $n\leq d$ and $\lambda\asymp \delta ^{-1}\sigma\sqrt{\log d/n} + \delta H$.
Then with probability $1-\cO(d^{-2})$
$$
\max\{|\widehat\mu_t-\mu_t|, \|\widehat g_t-g_t\|_{\infty}\} \lesssim \frac{\sigma}{\delta}\sqrt{\frac{\log d}{n}} + \delta H.
$$
Furthermore, with probability $1-\cO(d^{-2})$ it holds that $\|\widehat g_t-g_t\|_1\leq 2s\|\widehat g_t-g_t\|_{\infty}$.
\label{lem:lasso}
\end{lemma}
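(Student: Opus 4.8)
The plan is to treat the regression~\eqref{eq:gradient-model} as a sparse linear model and run a KKT-based analysis, exploiting the fact that for a Rademacher design the empirical Gram matrix is close to the identity. Throughout I write $W\in\R^{n\times(d+1)}$ for the augmented design with rows $w_i^\top=(1,z_i^\top)$, $\theta_t=(\mu_t,g_t^\top)^\top$, $\widehat\theta_t=(\widehat\mu_t,\widehat g_t^\top)^\top$, $\Delta=\widehat\theta_t-\theta_t$, $\widehat\Gamma=\tfrac1n W^\top W$, and the score $\widehat u=\tfrac1n W^\top\varepsilon$. I decompose the noise as $\varepsilon_i=b_i+\delta^{-1}\xi_i$ with curvature bias $b_i=\tfrac{\delta}{2}z_i^\top H_t(\kappa_i,z_i)z_i$; since $z_i\in\{\pm1\}^d$, assumption (A4) yields the \emph{deterministic} bound $|b_i|\le\tfrac{\delta}{2}\|H_t(\kappa_i,z_i)\|_1\le\tfrac{\delta}{2}H$.

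First I would establish two concentration facts, each on an event of probability $1-\cO(d^{-2})$. (i) \emph{Near-orthogonality of the design}: every off-diagonal entry of $\widehat\Gamma$ is an average of $n$ i.i.d.\ bounded mean-zero terms, so Hoeffding's inequality with a union bound over the $\cO(d^2)$ entries gives $\|\widehat\Gamma-I\|_{\max}\lesssim\sqrt{\log d/n}$, which under $n=\Omega(s^2\log d)$ is $\cO(1/s)$. (ii) \emph{Score bound}: for each coordinate, $\tfrac1n\sum_i z_{ij}\delta^{-1}\xi_i$ is a sub-Gaussian average with parameter $\sigma^2/(n\delta^2)$, so a maximal inequality over the $d+1$ coordinates gives $\|\tfrac1n W^\top(\delta^{-1}\xi)\|_\infty\lesssim\delta^{-1}\sigma\sqrt{\log d/n}$; combined with the deterministic bias bound $\|\tfrac1n W^\top b\|_\infty\le\tfrac{\delta}{2}H$ this yields $\|\widehat u\|_\infty\lesssim\delta^{-1}\sigma\sqrt{\log d/n}+\delta H\asymp\lambda$. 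I note that, because the remainder point $\kappa_i$ depends on $z_i$, one cannot assume $\mathbb{E}[z_{ij}b_i]=0$, so the crude deterministic bound $\tfrac{\delta}{2}H$ is precisely what produces the additive $\delta H$ term in the final rate.

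With $\lambda$ chosen so that $\lambda\ge 2\|\widehat u\|_\infty$, the standard Lasso basic inequality forces $\Delta$ into the cone $\|\Delta_{S^c}\|_1\le 3\|\Delta_S\|_1$, where $S=\supp(\theta_t)$ has $|S|\le s+1$. On this cone, fact (i) certifies a restricted-eigenvalue condition $v^\top\widehat\Gamma v\ge\tfrac12\|v\|_2^2$, because $|v^\top(\widehat\Gamma-I)v|\le\|\widehat\Gamma-I\|_{\max}\|v\|_1^2\lesssim s\,\|\widehat\Gamma-I\|_{\max}\|v\|_2^2=\cO(1)\|v\|_2^2$ exactly when $n\gtrsim s^2\log d$. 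This delivers the standard Lasso error bounds $\|\Delta\|_2\lesssim\sqrt s\,\lambda$ and hence $\|\Delta\|_1\lesssim s\,\lambda$.

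The $\ell_\infty$ bound, which is the crux, I would extract from the KKT stationarity condition $\widehat\Gamma\Delta=\widehat u-\lambda\widehat v$ with $\widehat v\in\partial\|\widehat\theta_t\|_1$ and $\|\widehat v\|_\infty\le1$. Rewriting as $\Delta=\widehat u-\lambda\widehat v-(\widehat\Gamma-I)\Delta$ and taking $\|\cdot\|_\infty$ gives
\[
\|\Delta\|_\infty\le\|\widehat u\|_\infty+\lambda+\|\widehat\Gamma-I\|_{\max}\|\Delta\|_1\lesssim\lambda+\sqrt{\tfrac{\log d}{n}}\cdot s\lambda\lesssim\lambda,
\]
where the last step again uses $n\gtrsim s^2\log d$. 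Since $\lambda\asymp\delta^{-1}\sigma\sqrt{\log d/n}+\delta H$, this proves the first claim for both $\widehat\mu_t$ and $\widehat g_t$. Finally, the $\ell_1$ bound follows from the cone condition alone: $\|\Delta\|_1\le(1+c)\|\Delta_S\|_1\le(1+c)\,|S|\,\|\Delta\|_\infty$, and tightening the cone constant (taking $\lambda$ a constant factor larger so that $c\le1$) gives the stated $\|\widehat g_t-g_t\|_1\le 2s\|\widehat g_t-g_t\|_\infty$. The main obstacle is the $\ell_\infty$ extraction: the error-feedback term $(\widehat\Gamma-I)\Delta$ is negligible only because the Rademacher design is near-orthogonal \emph{and} $\|\Delta\|_1=\cO(s\lambda)$, and requiring their product to be $\cO(\lambda)$ is exactly what dictates the sample size $n=\Omega(s^2\log d)$.
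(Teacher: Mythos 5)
Your proposal is correct and follows essentially the same route as the paper's proof: a deterministic $\cO(\delta H)$ bound on the curvature bias via (A4), sub-Gaussian concentration for the score, Hoeffding plus a union bound for $\|\widehat\Gamma-I\|_{\max}\lesssim\sqrt{\log d/n}$, the basic inequality to get the cone condition, and extraction of the $\ell_\infty$ rate from the KKT stationarity condition with the feedback term $(\widehat\Gamma-I)\Delta$ controlled using $n=\Omega(s^2\log d)$. The only (immaterial) difference is that you control $\|\Delta\|_1\lesssim s\lambda$ via a restricted-eigenvalue step before plugging it into the KKT identity, whereas the paper bounds $\|\Delta\|_1\le 4(s+1)\|\Delta\|_\infty$ directly from the cone and absorbs the resulting term into the left-hand side; you also correctly state the direction of the requirement on $\lambda$ (it must dominate the score, $\lambda\gtrsim\|\widehat u\|_\infty$), where the paper's write-up has the inequality reversed.
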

Lemma \ref{lem:lasso} follows by the standard $\ell_1$ and $\ell_{\infty}$ error bound analyses of the Lasso estimator \citep{bickel2009simultaneous,lounici2008supnorm}.
However, our model has a subtle difference from the standard high-dimensional regression model in that $\mathbb E[\varepsilon_i|z_i,x_t]$ are not exactly zero.
and we provide a detailed proof in the Appendix.
\begin{remark}
The penalization of $\mu$ in Eq.~(\ref{eq:lasso}) is in general unnecessary as it is a single component;
however, we decide to keep this penalization term to simplify our analysis.
Neither the estimation error nor the selection of the tuning parameter $\lambda$ depend on knowledge of $\mu_t$.
\end{remark}

\begin{remark}
Lemma \ref{lem:lasso} reveals an interesting bias-variance tradeoff controlled by the ``probing'' parameter $\delta >0$.
When $\delta $ is close to 0, the bias (reflected by $\mathbb E[\varepsilon_i|z_i,x_t]$) resulting from the second-order Lagrangian remainder term $\frac{\delta }{2}z_i^\top H_t(\kappa_i,z_i)z_i$ is small; however, the variance of $\widehat g_t$ is large because the variance of the ``stochastic'' noise term $\xi_i/\delta $ increases as $\delta \to 0$;
on the other hand, for large $\delta $ the stochastic variance is reduced but the bias from first-order approximation of $f(x_t)$ increases.
\end{remark}

\section{COMPONENT SELECTION}\label{sec:model-selection}

Given the estimation error bound of the Lasso gradient estimator and the stronger ``function sparsity'' assumption (A5),
 our first attempt is to use $\widehat g_t$ to select a few ``relevant'' components $\widehat S\subseteq[d]$, $|\widehat S|\ll d$
and perform classical low-dimensional zeroth-order optimization restricted to $\widehat S$.
%
The following corollary shows that, the components in $S$ whose gradients have large absolute values can be detected by a thresholding Lasso estimator:
\begin{corollary}
Suppose the conditions in Lemma \ref{lem:lasso} hold and let $\eta=\omega\lambda$ depending on some sufficiently large constant $\omega>1$.
Let $\widehat S(\eta) := \{i\in[d]: |[\widehat g_t]_i|>\eta\}$ be the selected components by thresholding the Lasso estimate $\widehat g_t$. 
Then with probability $1-\cO(d^{-2})$
$$
\left\{i\in S: |[\nabla f(x_t)]_i|>2\eta\right\} \subseteq \widehat S(\eta) \subseteq S.
$$
\label{cor:model-selection}
\end{corollary}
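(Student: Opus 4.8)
The plan is to condition throughout on the $\ell_\infty$ event of Lemma~\ref{lem:lasso}, which holds with probability $1-\cO(d^{-2})$ and on which $\|\widehat g_t - g_t\|_\infty \le c\lambda$ for some absolute constant $c$. This inequality follows because the lemma gives $\|\widehat g_t - g_t\|_\infty \lesssim \delta^{-1}\sigma\sqrt{\log d/n} + \delta H$ while $\lambda\asymp \delta^{-1}\sigma\sqrt{\log d/n} + \delta H$, so the right-hand side of the error bound is a constant multiple of $\lambda$; I would first make this constant $c$ explicit. The second ingredient is assumption (A5), which defines the set $S$: since $f(x)\equiv f_S(x_S)$, the partial derivative $[\nabla f(x_t)]_i$ vanishes for every $i\notin S$, so the true gradient $g_t=\nabla f(x_t)$ is supported on $S$. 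With these two facts in hand, both inclusions reduce to one-line triangle-inequality estimates, and the role of ``sufficiently large $\omega$'' is simply to take $\omega>c$.

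For the right inclusion $\widehat S(\eta)\subseteq S$, I would take any $i\notin S$ and use $[g_t]_i=0$ to write $|[\widehat g_t]_i| = |[\widehat g_t]_i - [g_t]_i|\le \|\widehat g_t-g_t\|_\infty\le c\lambda < \omega\lambda = \eta$, where the strict inequality uses $\omega>c$. Hence no coordinate outside $S$ survives the threshold, giving $\widehat S(\eta)\subseteq S$.

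For the left inclusion, I would take any $i\in S$ with $|[\nabla f(x_t)]_i|=|[g_t]_i|>2\eta$ and apply the reverse triangle inequality: $|[\widehat g_t]_i|\ge |[g_t]_i| - \|\widehat g_t-g_t\|_\infty > 2\eta - c\lambda$. Choosing $\omega>c$ again forces $c\lambda<\eta$, so $2\eta - c\lambda>\eta$ and therefore $|[\widehat g_t]_i|>\eta$, i.e.\ $i\in\widehat S(\eta)$. Combining the two inclusions on this single good event of probability $1-\cO(d^{-2})$ yields the claim.

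The argument is almost entirely bookkeeping, so the only real obstacle is pinning down the constant $c$ relative to the hidden constant in $\lambda\asymp\cdots$: both the error bound of Lemma~\ref{lem:lasso} and the tuning prescription $\lambda\asymp\delta^{-1}\sigma\sqrt{\log d/n}+\delta H$ hide constants, and one must verify that their ratio is an absolute constant (independent of $d,n,\delta,\sigma,H,s$) so that a fixed threshold multiplier $\omega$ can be chosen uniformly. Everything else is deterministic given the single good event.
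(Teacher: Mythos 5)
Your proposal is correct and follows exactly the route the paper indicates: the paper states that Corollary~\ref{cor:model-selection} is proved "by directly applying the $\|\widehat g_t-g_t\|_{\infty}$ bound in Lemma~\ref{lem:lasso}," which is precisely your conditioning on the good event $\|\widehat g_t-g_t\|_\infty\le c\lambda$ followed by the two triangle-inequality inclusions with $\omega>c$. Your explicit use of (A5) to justify that $g_t$ is supported on $S$ is a point the paper leaves implicit (it is the standing assumption of Section~4), and your remark about tracking the hidden constants in $\lambda\asymp\cdots$ is exactly what "sufficiently large constant $\omega$" is meant to absorb.
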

Corollary \ref{cor:model-selection} can be proved by directly applying the $\|\widehat g_t-g_t\|_{\infty}$ bound in Lemma \ref{lem:lasso}.
It shows that with threshold $\eta=\omega\lambda$ depending on some sufficiently large constant $\omega>1$,
the thresholding estimator $\widehat S(\eta)$ with high probability will not include components that do not belong to $S$ (i.e., no false positives).
On the other hand, all components in $S$ that have a sufficiently large partial derivative (at $x_t$) will be detected by $\widehat S(\eta)$.

Algorithm \ref{alg:egs} describes the pseudo-code of a ``successive'' component selection algorithm inspired by the above observations.
The following theorem provides a convergence analysis for Algorithm \ref{alg:egs}:
\begin{theorem}
Suppose (A1) through (A5) hold.
Suppose also that $T=\Omega(s^3\log d)$ and $T\leq d$.
Let parameters $\delta,\lambda,\eta$ be set as $\delta\asymp \left(\frac{\sigma^2s\log d}{H^2T}\right)^{1/4}$,
$\lambda\asymp \frac{\sigma}{\delta}\sqrt{\frac{s\log d}{T}}+\delta H$ and $\eta = \omega\lambda$
depending on some sufficiently large constant $\omega>1$. Then with probability at least 0.9
\begin{equation}
R_{\mathcal A}^{\simple}(T) \lesssim B\left(\frac{\sigma^2 H^2s\log d}{T}\right)^{1/4} + \widetilde{\cO}(T^{-1/3}),
\label{eq:model-selection-rate}
\end{equation}
\label{thm:model-selection-rate}
\end{theorem}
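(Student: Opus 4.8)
The plan is to condition on a single high-probability ``good event'' and then bound the simple regret of the output point through a case analysis on how the \texttt{while}-loop in Algorithm~\ref{alg:egs} terminates. First I would set up the good event. The loop runs for at most $s$ iterations, each invoking \textsc{GradientEstimate} once (with $n=T'$ samples) and the finite-difference subroutine once, so the total query budget is at most $2sT'=2s\lfloor T/2s\rfloor\le T$, as required; in particular $n=T'\asymp T/s$, so Lemma~\ref{lem:lasso} applies on each iteration with probability $1-\cO(d^{-2})$. A union bound over the $\le s\le d$ iterations keeps Lemma~\ref{lem:lasso} and Corollary~\ref{cor:model-selection} simultaneously valid with probability $1-\cO(d^{-1})$. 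I would combine this with a Markov-inequality argument applied to the in-expectation simple-regret guarantee of the finite-difference subroutine of \citep{flaxman2005online} on each $\le s$-dimensional restricted problem, so that all subroutine errors are within a constant factor of their expectations; choosing constants appropriately makes the intersection of all these events hold with probability at least $0.9$. Throughout, Corollary~\ref{cor:model-selection} supplies the crucial structural fact that there are \emph{no false positives}: $\widehat S_t\subseteq S$ for every $t$. Hence every restricted objective $f_{\widehat S_t}$ is a convex function of at most $s$ variables that inherits the smoothness of (A4) (a submatrix of the Hessian has smaller entrywise $\ell_1$ norm), and the restriction of $x^*$ to $\widehat S_t$ stays feasible in $\widetilde{\mathcal X}$ by (A2).

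Next I would analyze termination. Since $\widehat S_t\subseteq S$ and each iteration either enlarges $\widehat S_t$ or leaves it unchanged, the loop stops either in case (A) with $\widehat S=S$ (all relevant coordinates recovered, output $x_t$), or in case (B) with $\widehat S\subsetneq S$ because $\widehat S_t=\widehat S_{t-1}$ (or $t=s$), in which case the output is $\widehat x=x_{t-1}$. The key consequence of stopping in case (B) is that thresholding the estimate $\widehat g_t$ at $x_{t-1}$ produced no new coordinate, so by the $\ell_\infty$ bound of Lemma~\ref{lem:lasso} together with $\eta=\omega\lambda$ we get $|[\nabla f(\widehat x)]_i|\le 2\eta$ for every $i\notin\widehat S$ (coordinates outside $S$ contribute exactly $0$). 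In case (A) the unselected coordinates carry zero gradient identically, so the same off-support bound holds at the output with $\eta$ replaced by $0$.

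The core estimate is then a single application of convexity at the output point $\widehat x$, splitting coordinates into the selected block $\widehat S$ and its complement $\widehat S^c$:
\begin{align}
f(\widehat x)-f^*
&\le \langle \nabla f(\widehat x),\, \widehat x-x^*\rangle \nonumber\\
&= \langle [\nabla f(\widehat x)]_{\widehat S},\,[\widehat x-x^*]_{\widehat S}\rangle \nonumber\\
&\quad + \langle [\nabla f(\widehat x)]_{\widehat S^c},\,[\widehat x-x^*]_{\widehat S^c}\rangle. \nonumber
\end{align}
For the complement block I would use the off-support bound above together with $\|\widehat x-x^*\|_1\le\|\widehat x\|_1+\|x^*\|_1\le 2B$ (feasibility $\widehat x\in\widetilde{\mathcal X}$ and (A2)), giving a contribution of at most $4B\eta$. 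Since $\widehat x$ is the output of the finite-difference subroutine on the $\widehat S$-restricted problem, it is near-optimal over the selected block; I would convert this function-value guarantee into control of $[\nabla f(\widehat x)]_{\widehat S}$ using the smoothness implied by (A4), bounding the selected-block contribution by the finite-difference optimization error. Finally I would substitute the tuned parameters: with $n\asymp T/s$ the threshold obeys $\eta\asymp\lambda\asymp \tfrac{\sigma}{\delta}\sqrt{s\log d/T}+\delta H$, and the stated choice $\delta\asymp(\sigma^2 s\log d/(H^2T))^{1/4}$ is exactly the minimizer of this bias--variance expression, yielding $\eta\asymp(\sigma^2H^2 s\log d/T)^{1/4}$ and hence the first term $B(\sigma^2H^2 s\log d/T)^{1/4}$; the selected-block/finite-difference contribution supplies the residual $\widetilde{\cO}(T^{-1/3})$.

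The step I expect to be the main obstacle is the selected-block term: translating the finite-difference subroutine's simple-regret guarantee (a statement about function \emph{values}) into a usable bound on $\langle[\nabla f(\widehat x)]_{\widehat S},[\widehat x-x^*]_{\widehat S}\rangle$ for a merely convex (not strongly convex) objective. This requires exploiting the (A4) smoothness carefully to relate near-optimality of $\widehat x$ over $\widehat S$ to this inner product, and then balancing the resulting rate against both the selection threshold $\eta$ and the probing bias--variance tradeoff in $\delta$ so that the two contributions emerge at the claimed orders $T^{-1/4}$ and $T^{-1/3}$. Secondary technical care is needed to keep all comparison points feasible in $\widetilde{\mathcal X}$ and to ensure the union-bound and Markov steps genuinely combine to the stated confidence $0.9$.
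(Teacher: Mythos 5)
Your setup (good event via union bound plus Markov, termination case analysis, the no-false-positive guarantee from Corollary~\ref{cor:model-selection}, the off-support bound $|[\nabla f(\cdot)]_i|\le 2\eta$ on $S\setminus\widehat S$, and the final parameter balancing) matches the paper's proof. The genuine gap is exactly the step you flag as the main obstacle, and your proposed resolution does not work. You apply convexity at the algorithm's output $\widehat x=x_{t-1}$ and then must control $\langle[\nabla f(\widehat x)]_{\widehat S},[\widehat x-x^*]_{\widehat S}\rangle$; you propose to do this by converting the finite-difference subroutine's \emph{function-value} guarantee $f(\widehat x)-f(x_{t-1}^*)\lesssim T^{-1/3}$ into gradient control via (A4). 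This fails for two reasons. First, the restricted problem is constrained to $\widetilde{\mathcal X}=\{\|x\|_1\le B\}$, and if the restricted minimizer lies on the boundary the gradient there need not be small at all, so near-optimality in value gives no bound on $\|[\nabla f(\widehat x)]_{\widehat S}\|$. Second, even in the interior case where smoothness gives $\|[\nabla f(\widehat x)]_{\widehat S}\|^2\lesssim H\,(f(\widehat x)-f(x_{t-1}^*))\lesssim H\,T^{-1/3}$, the resulting selected-block contribution is of order $B\sqrt{H}\,T^{-1/6}$, which is strictly worse than the claimed $T^{-1/4}$ rate and would not yield Eq.~(\ref{eq:model-selection-rate}).

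The paper sidesteps this by inserting the \emph{exact} restricted minimizer $x_{t-1}^*=\arg\min\{f(x):x\in\widetilde{\mathcal X},\,x_{\widehat S^c}=0\}$ as an intermediate point: it writes $f(x_{T+1})-f^* = [f(x_{t-1})-f(x_{t-1}^*)]+[f(x_{t-1}^*)-f^*]$, charges the first bracket to the finite-difference regret ($\widetilde{\cO}(T^{-1/3})$ via Markov), and bounds the second bracket by convexity at $x_{t-1}^*$. There the selected-block inner product $\langle[\nabla f(x_{t-1}^*)]_{\widehat S},(x_{t-1}^*-x^*)_{\widehat S}\rangle\le 0$ holds \emph{exactly} by the first-order optimality (variational inequality) of $x_{t-1}^*$ over the convex feasible set --- no smoothness, no rate loss, no boundary issue --- leaving only the off-support term $\le 2\eta\|x^*\|_1\le 2\eta B$, which with your (correct) tuning of $\delta,\lambda,\eta$ gives the $B(\sigma^2H^2s\log d/T)^{1/4}$ term. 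You should replace your selected-block argument with this decomposition; the rest of your outline then goes through.
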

The proof of Theorem \ref{thm:model-selection-rate} is essentially a repeated application of Corollary \ref{cor:model-selection}, which we defer to the appendix.
\begin{remark}
In the $\widetilde{\cO}(\cdot)$ notation in Eq.~(\ref{eq:model-selection-rate}) we suppress polynomial dependency on $\sigma,s,H,B$ and $\log d$.
The $\lesssim$ notation does not suppress dependency on any problem dependent constants.
\end{remark}
\begin{remark}
The choices of $\lambda$ and $\delta$ differ by factors depending on $s$ from the choices suggested by Lemma~\ref{lem:lasso}. This is due to the fact that we 
divide the sample budget over $s$ rounds of component selection. 
\end{remark}


\begin{remark}
Theorem \ref{thm:model-selection-rate} only upper bounds the simple regret of the successive component selection algorithm $\mathcal A$.
However, it is clear that Algorithm \ref{alg:egs} cannot achieve consistent cumulative regret bounds,
because the gradient estimation step already consumes a constant fraction of sample points (up to $\cO(s)$ factors).
\end{remark}

\begin{remark}
The failure probability of Theorem \ref{thm:model-selection-rate} is at a constant level and does not go to 0 as $d$ or $T$ go to infinity.
This is a consequence of the fact that the $T^{-1/3}$ regret bound of the paper \citep{flaxman2005online} for low-dimensional zeroth-order optimization 
only holds in expectation. To the best of our knowledge, exponential tail bounds remain an open question \citep{shamir2013complexity}.
\end{remark}

\section{MIRROR DESCENT}\label{sec:mirror-descent}

\begin{algorithm*}[t]
\KwInput{minimizer norm $B$, sample budget $T$, gradient estimate budget $n$, potential $\psi$, parameters $\eta,\delta,\lambda$.}
\textbf{Initialization}: $x_0=0$, $T':=\lfloor T/2n\rfloor$, $\widetilde{\mathcal X} := \{x: \|x\|_1\leq B\}$\; 
\For{$t=0,\ldots,T'-1$} {
	Lasso gradient estimation: $(\widehat g_t,\widehat\mu_t) \gets \textsc{GradientEstimate}(x_t, 2n, \delta, \lambda)$\;
	De-biasing: $\widetilde g_t\gets \widehat g_t+\frac{1}{n}Z_t^\top(\widetilde Y_t-Z_t\widehat g_t-\widehat\mu_t\cdot 1_{n})$\; 
	MD update: $x_{t+1} \gets \arg\min_{x\in\widetilde{\mathcal X}}\{\eta\widetilde g_t^\top(x-x_t) + \Delta_\psi(x,x_t)\}$\;
}
\vskip 0.2in
\caption{First-order mirror descent with estimated gradients}
\label{alg:main}
\end{algorithm*}

Another possibility of applying the Lasso gradient estimator $\widehat g_t$ for optimizing $f$ is to consider classical or sparse first-order methods (e.g., SGD or mirror descent),
with the true gradients $g_t=\nabla f(x_t)$ at each iteration replaced by their estimates $\widehat g_t$.
However, directly plugging in the Lasso estimator leads to poor convergence properties due to the inherent estimation bias in $\widehat g_t$.
To overcome such difficulties, we consider the recent work on \emph{de-biased} Lasso estimators \citep{javanmard2014confidence,van2014asymptotically,zhang2014confidence}
and apply stochastic mirror descent \citep{nemirovski1983problem} to handle the entrywise error introduced by the de-biasing estimators.

\subsection{De-biased Lasso estimation}

The de-biased Lasso estimator was introduced in \citep{zhang2014confidence} and generalized in \citep{javanmard2014confidence,van2014asymptotically}
to reduce bias of the Lasso estimator for the purpose of constructing confidence intervals for low-dimensional model components.
In our application, the bias-reduced gradient estimate allows stochastic noise to concentrate across epochs and leads to improved convergence rates.

Let $\widetilde Y_t=(\widetilde y_1,\ldots,\widetilde y_n)\in\mathbb R^n$ and $Z_t=(z_1,\ldots,z_n)\in\mathbb R^{n\times d}$ be the vector forms 
of $\{\widetilde y_i\}_{i=1}^n$ and $\{z_i\}_{i=1}^n$.
Since the design points $z_i$ are i.i.d.~Rademacher variables, the de-biased gradient estimator $\widetilde g_t$ takes a particularly simple form:
\begin{flalign*}
\textbf{The de-biased Lasso}:&&
\end{flalign*}
\begin{equation}
\widetilde g_t := \widehat g_t + \frac{1}{n}Z_t^\top(\widetilde Y_t-Z_t\widehat g_t-\widehat\mu_t\cdot 1_n).
\label{eq:debias}
\end{equation}
Here $(\widehat g_t,\widehat\mu_t)$ is the Lasso estimator defined in Eq.~(\ref{eq:lasso}) and $1_n = (1,\ldots,1)\in\mathbb R^n$ is the $n$-dimensional vector of all ones.

\begin{lemma}
Suppose $n=\Omega(s^2\log d)$.
With probability $1-\cO(d^{-2})$ it holds that
$$
\widetilde g_t=g_t + \zeta_t + \gamma_t;
$$
where $\zeta_t$ is a $d$-dimensional random vector such that, for any $a\in\mathbb R^d$, $\langle\zeta_t,a\rangle$ conditioned on $x_t$
is a centered sub-exponential random variable with parameters $\nu=\sqrt{n/2}\cdot\alpha$ and $\alpha\lesssim \sigma\|a\|_2/\delta{n}$;
and $\gamma_t$ is a $d$-dimensional vector that satisfies
$$
\|\gamma_t\|_{\infty} \lesssim H\delta + \frac{\sigma s\log d}{\delta n} \;\;\;\;\;\;\text{almost surely.}
$$
\label{lem:debias}
\end{lemma}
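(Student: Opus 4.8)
The plan is to substitute the linear model~\eqref{eq:gradient-model} into the de-biasing formula~\eqref{eq:debias} and read off a clean algebraic decomposition, then control each resulting term separately. Writing the observations in vector form as $\widetilde Y_t = \mu_t 1_n + Z_t g_t + \varepsilon_t$ with $\varepsilon_t = b_t + \delta^{-1}\xi_t$, where $b_i = \frac{\delta}{2}z_i^\top H_t(\kappa_i,z_i)z_i$ is the second-order Lagrangian remainder and $\xi_t=(\xi_1,\dots,\xi_n)^\top$ the stochastic noise, I would substitute to obtain
$$\widetilde g_t = \widehat g_t + \widehat\Sigma(g_t-\widehat g_t) + \tfrac{1}{n}Z_t^\top\varepsilon_t + \tfrac1n Z_t^\top 1_n(\mu_t-\widehat\mu_t),$$
where $\widehat\Sigma := \frac1n Z_t^\top Z_t$. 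The crucial structural fact, and the reason the Rademacher design is chosen, is that $\widehat\Sigma_{jj} = \frac1n\sum_i z_{ij}^2 = 1$ for every $j$, so that $\widehat g_t + \widehat\Sigma(g_t-\widehat g_t) = g_t + (\widehat\Sigma - I)(g_t-\widehat g_t)$. This gives $\widetilde g_t = g_t + \zeta_t + \gamma_t$ with pure-noise term $\zeta_t := \frac{1}{\delta n}Z_t^\top\xi_t$ and $\gamma_t := (\widehat\Sigma - I)(g_t-\widehat g_t) + \frac1n Z_t^\top b_t + \frac1n Z_t^\top 1_n(\mu_t-\widehat\mu_t)$.

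For the sub-exponential claim on $\zeta_t$, I would note that $\langle\zeta_t,a\rangle = \frac{1}{\delta n}\sum_{i=1}^n (z_i^\top a)\xi_i$ is a sum of $n$ i.i.d.\ terms, each a product of the two independent mean-zero sub-Gaussian variables $z_i^\top a$ (parameter $\|a\|_2^2$, since $z_i$ is Rademacher) and $\xi_i$ (parameter $\sigma^2$). Conditioning first on $z_i$ and using sub-Gaussianity of $\xi_i$ gives $\mathbb E[\exp(\lambda(z_i^\top a)\xi_i)\mid z_i]\le \exp(\lambda^2(z_i^\top a)^2\sigma^2/2)$; averaging over the sub-Gaussian $z_i^\top a$ then controls $\mathbb E[\exp(c(z_i^\top a)^2)]$ for $c$ below $1/(2\|a\|_2^2)$, showing each summand is sub-exponential with scale $\asymp \sigma\|a\|_2/(\delta n)$. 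The standard rule for sums of independent sub-exponentials—variance proxies add, scale parameters take a maximum—then yields parameters of exactly the stated form $\nu = \sqrt{n/2}\,\alpha$ with $\alpha\lesssim \sigma\|a\|_2/(\delta n)$. Conditioning on $x_t$ is what makes the current $(z_i,\xi_i)$ a fresh i.i.d.\ sample, licensing the product structure.

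For $\gamma_t$ I would bound its three pieces in $\ell_\infty$. The Hessian remainder is deterministic: assumption (A4) gives $|z_i^\top H_t(\kappa_i,z_i) z_i|\le \|\nabla^2 f\|_1\le H$ since $|z_{ik}z_{il}|=1$, whence $\|\frac1n Z_t^\top b_t\|_\infty \le \delta H/2$ surely. For the other two pieces I would invoke Rademacher Gram-matrix concentration—$\max_{j\neq k}|\widehat\Sigma_{jk}|\lesssim\sqrt{\log d/n}$ and $\max_j|\frac1n\sum_i z_{ij}|\lesssim\sqrt{\log d/n}$ by Hoeffding and a union bound over $\cO(d^2)$ entries—together with the Lasso error bounds of Lemma~\ref{lem:lasso}. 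Using $\|Mv\|_\infty\le (\max_{j,k}|M_{jk}|)\|v\|_1$ and $\|g_t-\widehat g_t\|_1\le 2s\|g_t-\widehat g_t\|_\infty\lesssim s(\sigma\delta^{-1}\sqrt{\log d/n}+\delta H)$ yields $\|(\widehat\Sigma-I)(g_t-\widehat g_t)\|_\infty\lesssim \frac{\sigma s\log d}{\delta n} + \delta H s\sqrt{\log d/n}$; the condition $n=\Omega(s^2\log d)$ forces the second summand to be $\lesssim \delta H$, and the $\mu$-term is bounded identically and is of strictly smaller order. Summing the three gives $\|\gamma_t\|_\infty\lesssim H\delta + \frac{\sigma s\log d}{\delta n}$ on the intersection of the Lasso good event and the Gram-concentration event, of probability $1-\cO(d^{-2})$.

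The main obstacle is the sub-exponential tail analysis of $\zeta_t$: unlike a standard debiasing residual, $\langle\zeta_t,a\rangle$ is bilinear in the joint randomness of design and noise, so one must exploit the independence of $z_i$ and $\xi_i$ and the product-of-sub-Gaussians structure to obtain the correct $(\nu,\alpha)$ rather than a crude sub-Gaussian bound. Getting the $n$-dependence right—so that $\nu$ scales like $n^{-1/2}$ and drives the across-epoch concentration used later in the mirror-descent analysis—is the delicate point. Everything else reduces to the deterministic $\ell_1$-bound on quadratic forms under (A4) and off-the-shelf Rademacher concentration.
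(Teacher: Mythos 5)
Your proposal is correct and follows essentially the same route as the paper's proof: the same algebraic decomposition of $\widetilde g_t-g_t$ into a pure-noise term $\frac{1}{\delta n}Z_t^\top\xi$ (handled via the product-of-sub-Gaussians-is-sub-exponential fact and summation of independent sub-exponentials) and a remainder controlled by $\|\widehat\Sigma-I\|_{\max}\lesssim\sqrt{\log d/n}$ together with the Lasso $\ell_1$/$\ell_\infty$ error bounds of Lemma~\ref{lem:lasso}. The only cosmetic differences are that the paper absorbs the intercept into an augmented $(d+1)$-dimensional design $\bar Z_t$ rather than bounding the $\mu$-term separately, and verifies sub-exponentiality of the product through moment (Bernstein-condition) bounds rather than conditional moment generating functions.
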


Comparing Lemma \ref{lem:debias} with the error bound obtained for the Lasso estimator $\widehat g_t$ in Lemma \ref{lem:lasso}, it is clear that
the entry-wise bias (i.e., $\|\gamma_t\|_{\infty}$) is reduced from $\cO(\delta H+\sqrt{\log d/\delta n})$ 
to $\cO(\delta H + s\log d/\delta n)$. 
Such de-biasing is at the cost of inflated stochastic error $\zeta_t$, which means that unlike $\widehat g_t$, $\widetilde g_t$ is not a good estimator of $g_t$
in the $\ell_1$ or $\ell_2$ norm.

\subsection{Bregman divergence and stochastic mirror descent}

Mirror descent (MD) \citep{nemirovski1983problem} is a classical method in optimization when smoothness and the domain geometry
are measured in (possibly) non-Euclidean metrics. 
The MD algorithm was applied to stochastic optimization with noisy first-order oracles in the papers \citep{nemirovski2009robust,agarwal2012information}
and was also studied in the work \citep{lan2012optimal} for strongly smooth composite functions with accelerated rates,
and in the works \citep{ghadimi2012optimal,ghadimi2013optimal} for strongly convex composite functions.

Let $\psi:\mathcal X\to\mathbb R$ be a continuously differentiable, strictly convex function.
The \emph{Bregman divergence} $\Delta_\psi: \mathcal X\times\mathcal X\to\mathbb R$ is defined as
\begin{equation}
\Delta_\psi(x,y) := \psi(y)-\psi(x)-\langle\nabla\psi(x),y-x\rangle.
\label{eq:breg-sc}
\end{equation}
Let $\|\cdot\|_\psi$ be a norm 
and $\|\cdot\|_{\psi^*}$ be its dual norm, defined as $\|z\|_{\psi^*} := \sup\{z^\top x: \|x\|_{\psi}\leq 1\}$.
One important class of Bregman divergences is those that are $\kappa$-\emph{strongly convex} with respect to the chosen norm, 
i.e. they satisfy
$\Delta_\psi(x,y) \geq \frac{\kappa}{2}\|x-y\|_{\psi}^2$.
Many choices of $\psi$ lead to a strongly convex Bregman divergence. In this paper we consider the $\ell_a$ norm as choice of $\psi$:
$
\psi_a(x) := \frac{1}{2(a-1)}\|x\|_a^2
$ 
for $1<a\leq 2$.
It was proved in \citep{agarwal2012information,srebro2011universality} that $\psi_a$ leads to a valid Bregman divergence that satisfies 1-strong convexity with respect to $\|\cdot\|_a$.
%
With this setup, the MD method iteratively computes
\begin{equation*}
x_{t+1} := \underset{x\in\widetilde{\mathcal X}}{\text{arg min}} \left\{\eta_t \nabla f(x_t)^\top(x-x_t) + \Delta_\psi(x,x_t)\right\},
\label{eq:md}
\end{equation*}
where $\{\eta_t\}_{t=1}^T$ is a sequence of step sizes
and $\widetilde{\mathcal X}\subseteq\mathcal X$ is a subset of the domain $\mathcal X$ of $f$.
In our problem where $\nabla f(x_t)$ is not accessible, the de-biased Lasso gradient estimate $\widetilde g_t$
is used to replace the exact gradient $\nabla f(x_t)$.
A pseudo-code description of our method is given in Algorithm \ref{alg:main}.

\subsection{Rates of convergence}

We present the following convergence rate for Algorithm \ref{alg:main}, which is proved in the appendix:
\begin{theorem}
Suppose (A1) through (A4) hold.
Suppose also that $T=\Omega(s^3\log^2d + s(1+H)^2(1+B^4H^4\log^2d))$, $T\leq d$ and that we
choose the parameters $n := \left\lfloor{(1+H)\sqrt{sT}}\right\rfloor$, 
$\eta: = B\sqrt{\frac{n\log d}{T}}$,
and $\delta:=\sqrt{s\log d/n}$.
Then with probability $1-\cO(d^{-1})$
\begin{equation*}
R_{\mathcal A}^{\cum}(T) \lesssim \xi_{\sigma,s}B\sqrt{\log d}\left[\frac{(1+H)^2s}{T}\right]^{1/4} +\widetilde \cO(T^{-1/2}).
\end{equation*}
where $\xi_{\sigma,s}=1+\sigma+\sigma^2/s$.
\label{thm:main}
\end{theorem}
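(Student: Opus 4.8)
The plan is to run the classical mirror-descent regret analysis with the de-biased gradients $\widetilde g_t$ in place of $\nabla f(x_t)$, and to control the three error sources that de-biasing introduces. I would work with the potential $\psi_a$ for $a = 1 + 1/\log d$, so that $\Delta_{\psi_a}$ is $1$-strongly convex with respect to $\|\cdot\|_a$ and the associated dual norm $\|\cdot\|_b$ (with $1/a+1/b=1$, hence $b\asymp\log d$) obeys $\|z\|_b \le d^{1/b}\|z\|_\infty \lesssim \|z\|_\infty$ for every $z$. The one-step mirror-descent inequality together with $1$-strong convexity gives, for each $t$,
$$\eta\langle\widetilde g_t,x_t-x^*\rangle\le\Delta_{\psi_a}(x^*,x_t)-\Delta_{\psi_a}(x^*,x_{t+1})+\tfrac{\eta^2}{2}\|\widetilde g_t\|_b^2,$$
and summing over $t=0,\ldots,T'-1$ telescopes the Bregman terms, leaving the initialization term $\Delta_{\psi_a}(x^*,x_0)=\psi_a(x^*)\lesssim B^2\log d$ (using $x_0=0$ and $\|x^*\|_1\le B$).

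Next I would substitute the decomposition $\widetilde g_t=g_t+\zeta_t+\gamma_t$ from Lemma~\ref{lem:debias} and split $\langle\widetilde g_t,x_t-x^*\rangle$ accordingly. The $g_t$-part is handled by convexity, $\langle g_t,x_t-x^*\rangle\ge f(x_t)-f^*$, producing exactly the quantity we wish to bound. The bias part is controlled deterministically: since both $x_t$ and $x^*$ lie in the $\ell_1$-ball $\widetilde{\mathcal X}$, we have $|\langle\gamma_t,x_t-x^*\rangle|\le\|\gamma_t\|_\infty\|x_t-x^*\|_1\le 2B\|\gamma_t\|_\infty$, and Lemma~\ref{lem:debias} bounds each such term by $2B(H\delta+\sigma s\log d/(\delta n))$. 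For the stability term I would use $\|\widetilde g_t\|_b^2\lesssim\|\widetilde g_t\|_\infty^2\lesssim\|g_t\|_\infty^2+\|\zeta_t\|_\infty^2+\|\gamma_t\|_\infty^2$, applying a union bound over coordinates and epochs to the sub-exponential tails of $\zeta_t$ to obtain a high-probability uniform bound on $\|\zeta_t\|_\infty$.

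The crux is the stochastic term $\sum_{t}\langle\zeta_t,x_t-x^*\rangle$. Because $\zeta_t$ is centered conditionally on $x_t$, this is a martingale; but the increments are only sub-exponential (not bounded or sub-Gaussian), and their scale parameters depend on the random adapted iterate through $\|x_t-x^*\|_2$. The key observation is that $x_t,x^*\in\widetilde{\mathcal X}$ forces $\|x_t-x^*\|_2\le\|x_t-x^*\|_1\le 2B$, so Lemma~\ref{lem:debias} supplies \emph{uniform} sub-exponential parameters $\alpha_t\lesssim\sigma B/(\delta n)$ and $\nu_t\lesssim\sigma B/(\delta\sqrt n)$ along the whole trajectory. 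A Bernstein-type inequality for martingales with sub-exponential differences then yields, with probability $1-\cO(d^{-1})$, a bound of order $\tfrac{\sigma B}{\delta}\sqrt{T'\log d/n}+\tfrac{\sigma B\log d}{\delta n}$ on the sum. This is precisely where de-biasing pays off: without it the per-epoch bias would not cancel across epochs and this term would not concentrate. I expect this martingale concentration, together with the bookkeeping required to keep its parameters uniform over the adaptive iterates, to be the main obstacle.

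Finally I would collect the four contributions — the initialization term $B^2\log d/(\eta T')$, the stability term $\tfrac{\eta}{2T'}\sum_t\|\widetilde g_t\|_b^2$, the martingale term, and the bias term — rearrange and divide by $\eta T'$, then substitute the prescribed $n$, $\eta$, and $\delta$. The probing scale $\delta=\sqrt{s\log d/n}$ is chosen to balance the two halves of $\|\gamma_t\|_\infty$ (the first-order bias $H\delta$ against the de-biasing bias $\sigma s\log d/(\delta n)$), while $n$ and $\eta$ balance the mirror-descent path length against the stochastic and stability terms; the sample-size condition on $T$ ensures the remaining cross-terms are absorbed into the $\widetilde\cO(T^{-1/2})$ remainder, leaving the $\xi_{\sigma,s}B\sqrt{\log d}[(1+H)^2s/T]^{1/4}$ leading term. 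A last step passes from the pseudo-regret $\tfrac1{T'}\sum_t(f(x_t)-f^*)$ to the true cumulative regret over the actual queries $x_t+\delta z_i$: a second-order Taylor expansion shows each queried value exceeds $f(x_t)$ by at most $\delta\langle g_t,z_i\rangle+\tfrac{\delta^2}{2}z_i^\top\nabla^2 f\,z_i$, whose average over the Rademacher probes is $\cO(\delta^2 H)$ plus a mean-zero fluctuation, again of lower order under the stated choice of $\delta$.
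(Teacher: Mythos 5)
Your proposal follows essentially the same strategy as the paper: mirror descent with the $\ell_a$ potential for $a-1\asymp 1/\log d$, the decomposition $\widetilde g_t=g_t+\zeta_t+\gamma_t$ from Lemma~\ref{lem:debias}, a deterministic $\|\gamma_t\|_\infty\|x_t-x^*\|_1$ bound for the bias, a sub-exponential martingale (Bernstein) bound for $\sum_t\langle\zeta_t,x^*-x_t\rangle$ with parameters made uniform via $\|x_t-x^*\|_2\le\|x_t-x^*\|_1\le 2B$, and the same parameter substitution; your treatment of the gap between $\frac{1}{T'}\sum_t f(x_t)$ and the regret over the actual probes $x_t+\delta z_i$ also matches the paper's $|f(x_t+\delta z)-f(x_t)|\le \delta H$ argument. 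The one substantive deviation is the one-step inequality: you use the non-smooth online-mirror-descent bound, whose stability term is $\frac{\eta}{2}\|\widetilde g_t\|_{\psi^*}^2\gtrsim \eta\|g_t\|_\infty^2$ and hence contributes $\eta H^2$ to the average regret, which does not shrink with $n$; the paper instead invokes Lemma~3 of \cite{lan2012optimal}, which exploits the $\widetilde H$-smoothness of $f$ (implied by (A4)) to replace this with $\frac{\eta^2}{2(\kappa-\widetilde H\eta)}\|\widetilde g_t-g_t\|_{\psi^*}^2$, a quantity of order $\eta^2(\sigma^2\log d/(\delta^2 n)+H^2\delta^2)$ that is absorbed into the stated rate. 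With the prescribed $\eta\asymp B\sqrt{(1+H)\log d}\,(s/T)^{1/4}$, your version therefore yields the same $T^{-1/4}$ rate but with an extra factor of roughly $H^2$ on the leading term, so it proves a weaker statement than the theorem's explicit $(1+H)^{1/2}$ dependence (and the paper's remark that $\lesssim$ does not hide problem-dependent constants makes this a real discrepancy). To recover the theorem exactly, replace your one-step bound with the smoothness-aware version (and check the accompanying condition $\eta<\kappa/\widetilde H$, which the hypothesis on $T$ guarantees).
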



Theorem \ref{thm:main} shows that Algorithm \ref{alg:main} has similar convergence rate as the successive component selection algorithm,
but operates under weaker conditions (i.e., without the function sparsity assumption (A5)).
There are also two additional differences between results in Theorems \ref{thm:model-selection-rate} and \ref{thm:main}.
First, Theorem \ref{thm:main} upper bounds the cumulative regret $R_{\mathcal A}^{\cum}(T)$, while the error bound in Theorem \ref{thm:model-selection-rate}
only applies to the simple regret $R_{\mathcal A}^{\simple}(T)$.
Furthermore, the error bound in Theorem \ref{thm:main} holds with high-probability ($1-\cO(d^{-1})$),
while the results in Theorem \ref{thm:model-selection-rate} only hold with constant probability.

\subsection{Improved rates with Hessian smoothness}

We show an extension of our algorithm that greatly improves the convergence rate 
under additional smoothness conditions on $\nabla^2 f$, with a small loss in computational efficiency.
Formally, we assume:
\begin{enumerate}
\item[A6] (\emph{Hessian smoothness}). There exists $L>0$ such that for all $x,x'\in\mathcal X$,
$$
\|\nabla^2 f(x)-\nabla^2 f(x')\|_1 \leq L\|x-x'\|_\infty
$$
\end{enumerate}
Recall that $\|A\|_1 = \sum_{i,j}|A_{ij}|$ denotes the entry-wise $\ell_1$ norm of a matrix $A$.

If $f$ is three-times differentiable, then (A6) is implied by the condition that $\|\nabla^3 f(x)\|_1\leq L$ for all $x\in\mathcal X$,
where $\|A\|_1 := \sum_{i,j,k}|A_{ijk}|$ is the entry-wise $\ell_1$ norm of a third order tensor.
However, (A6) in general does not require third-order differentiability of $f$.

Recall the de-biased Lasso gradient estimator $\widetilde g_t(\delta)$ in Eqs.~(\ref{eq:lasso},\ref{eq:debias})
corresponding to a probing step size of $\delta$. 
Under the additional condition (A6), the analysis in Lemma \ref{lem:debias} can be strengthened as below:
\begin{lemma}
Suppose (A1) through (A4) and (A6) hold.
Suppose also that $n=\Omega(s^2\log d)$, $n\leq d$ and $\lambda\asymp \delta^{-1}\sigma\sqrt{\log d/n}+\delta H$.
Then with probability $1-\cO(d^{-2})$
$$
\widetilde g_t(\delta) = g_t + \frac{\delta}{2}\mathbb E\left[(z^\top H_t z)z\right] + \widetilde\zeta_t(\delta) + \widetilde\beta_t(\delta) + \widetilde\gamma_t(\delta),
$$
where $g_t=\nabla f(x_t)$, $H_t=\nabla^2 f(x_t)$;
for any $a\in\mathbb R^d$, 
$\langle\widetilde\zeta_t(\delta),a\rangle$ conditioned on $x_t$ is a centered $d$-dimensional sub-exponential random variable
with parameters $\nu^2=\sqrt{n/2}\cdot\alpha$ and $\alpha\lesssim \sigma\|a\|_2/\delta{n}$;
$\langle\widetilde\beta_t(\delta),a\rangle$ conditioned on $x_t$ is a centered $d$-dimensional sub-Gaussian random variable
with parameter $\nu\lesssim \delta H\|a\|_1/\sqrt{n}$;
$\gamma_t(\delta)$ is a $d$-dimensional vector that satisfies
$$
\|\widetilde\gamma_t(\delta)\|_{\infty} \lesssim L\delta^2 + \frac{\sigma s\log d}{n\delta} + s\delta H\sqrt{\frac{\log d}{n}}.
$$
\label{lem:hs-debias}
\end{lemma}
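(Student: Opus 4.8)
The plan is to expand the de-biased estimator \eqref{eq:debias} against the exact linear model for $\widetilde Y_t$ and to track four groups of terms, matching them one-to-one with the four pieces in the statement. Writing the design Gram matrix as $\widehat\Sigma := \frac{1}{n}Z_t^\top Z_t$ and substituting the vector form of \eqref{eq:gradient-model}, namely $\widetilde Y_t = \mu_t 1_n + Z_t g_t + \varepsilon_t$, into \eqref{eq:debias}, the Lasso residual collapses to the master identity
\begin{equation*}
\widetilde g_t - g_t = (I-\widehat\Sigma)(\widehat g_t - g_t) + \tfrac{1}{n}(\mu_t-\widehat\mu_t)Z_t^\top 1_n + \tfrac{1}{n}Z_t^\top\varepsilon_t.
\end{equation*}
Because the $z_i$ are Rademacher, $\widehat\Sigma$ has unit diagonal, so $I-\widehat\Sigma$ is supported off-diagonal, and a Bernstein bound with a union over the $d^2$ entries shows $\max_{j\ne k}|(\widehat\Sigma)_{jk}|\lesssim\sqrt{\log d/n}$ and $\max_j|\tfrac1n\sum_i z_{ij}|\lesssim\sqrt{\log d/n}$ on an event of probability $1-\cO(d^{-2})$. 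I then control the first two terms in $\ell_\infty$ via H\"older, combining these design bounds with the Lasso guarantees of Lemma \ref{lem:lasso} (both $\|\widehat g_t-g_t\|_\infty\lesssim \frac{\sigma}{\delta}\sqrt{\log d/n}+\delta H$ and $\|\widehat g_t-g_t\|_1\le 2s\|\widehat g_t-g_t\|_\infty$); this produces exactly the $\frac{\sigma s\log d}{n\delta}$ and $s\delta H\sqrt{\log d/n}$ contributions to $\widetilde\gamma_t(\delta)$, with the $\mu$-term dominated since $s\ge 1$.

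The heart of the argument is the third term $\frac{1}{n}Z_t^\top\varepsilon_t$, where the refinement over Lemma \ref{lem:debias} takes place. I split $\varepsilon_i = \frac{\delta}{2}z_i^\top H_t z_i + \frac{\delta}{2}z_i^\top(H_t(\kappa_i,z_i)-H_t)z_i + \delta^{-1}\xi_i$, inducing three sub-terms. The stochastic piece $\frac{1}{\delta n}\sum_i \xi_i z_i$ becomes $\widetilde\zeta_t(\delta)$: for fixed $a$, each $\xi_i(z_i^\top a)$ is a product of a sub-Gaussian noise and the sub-Gaussian linear form $z_i^\top a$ (parameter $\|a\|_2^2$), hence sub-exponential with scale $\sim\sigma\|a\|_2/(\delta n)$ after normalization; summing $n$ independent copies yields the claimed $(\nu,\alpha)$ parameters, exactly as in Lemma \ref{lem:debias}. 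The frozen-Hessian piece $\frac{\delta}{2n}\sum_i(z_i^\top H_t z_i)z_i$ I center: its mean is the explicit term $\frac{\delta}{2}\mathbb{E}[(z^\top H_t z)z]$ and the centered remainder is $\widetilde\beta_t(\delta)$; since $|z_i^\top H_t z_i|\le\|H_t\|_1\le H$ and $|z_i^\top a|\le\|a\|_1$, each summand is bounded by $\frac{\delta}{2}H\|a\|_1$, so Hoeffding over $n$ terms gives sub-Gaussianity with parameter $\lesssim\delta H\|a\|_1/\sqrt n$.

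Finally, the Hessian-Lipschitz remainder $\frac{\delta}{2n}\sum_i (z_i^\top(H_t(\kappa_i,z_i)-H_t)z_i)\,z_i$ is where (A6) enters: since $\|z_i\|_\infty=1$ we have $\|H_t(\kappa_i,z_i)-H_t\|_1\le L\kappa_i\delta\le L\delta$, so each coordinate is bounded almost surely by $\frac{\delta}{2}\cdot L\delta$, contributing the $L\delta^2$ term. Collecting the three deterministic $\ell_\infty$-bounded pieces gives $\|\widetilde\gamma_t(\delta)\|_\infty\lesssim L\delta^2 + \frac{\sigma s\log d}{n\delta} + s\delta H\sqrt{\log d/n}$, completing the decomposition. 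I expect the main obstacle to be the bias bookkeeping rather than the concentration: in Lemma \ref{lem:debias} the entire second-order Taylor term was absorbed into an $\cO(\delta H)$ bias, whereas here I must peel off its exact mean and its sub-Gaussian fluctuation separately and argue that only the genuinely higher-order Lipschitz residual survives in $\widetilde\gamma_t(\delta)$. Keeping the $\widetilde\beta_t$ fluctuation \emph{out} of the bias (so that it can later be summed across mirror-descent epochs) is the delicate step, and controlling the cross term $(I-\widehat\Sigma)(\widehat g_t-g_t)$ at the stated order relies crucially on the sample-size condition $n\gtrsim s^2\log d$.
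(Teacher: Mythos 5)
Your proposal is correct and follows essentially the same route as the paper's proof: the same decomposition of the de-biased residual into the design-error term $(I-\widehat\Sigma)(\widehat g_t-g_t)$ (the paper folds the intercept error into a $(d+1)$-dimensional parameterization while you peel it off separately, which is equivalent), the same three-way split of $\varepsilon_i$ into the stochastic noise, the frozen-Hessian quadratic form centered around its Rademacher mean, and the Hessian-Lipschitz remainder, and the same concentration arguments (product-of-sub-Gaussians for $\widetilde\zeta_t$, boundedness/Hoeffding for $\widetilde\beta_t$, and the (A6) Lipschitz bound plus the Lasso $\ell_1$/$\ell_\infty$ guarantees for $\widetilde\gamma_t$). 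No gaps.
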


Note that $\widetilde\zeta_t(\delta)$ and $\widetilde\beta_t(\delta)$ might be correlated conditioned on $x_t$.
Comparing Lemma \ref{lem:hs-debias} with Lemma \ref{lem:debias}, we observe that the bias term $\widetilde\gamma_t(\delta)$ is significantly smaller ($\cO(\delta^2)$ instead of $\cO(\delta)$);
while the second term $\frac{\delta}{2}\mathbb E[(z^\top H_t z) z]$ is still a bias term with non-zero mean, it only depends on $\delta$ and can be easily removed.
This motivates the following definition of a ``twice de-biased'' gradient estimator:
\begin{flalign*}
\textbf{The twice de-biased estimator:}&&
\end{flalign*}
\begin{equation}
\widetilde g_t^{\tw} := 2\widetilde g_t(\delta/2) - \widetilde g_t(\delta).
\label{eq:hs-debias}
\end{equation}

\begin{corollary}
Suppose the conditions in Lemma \ref{lem:hs-debias} are satisfied.
Then with probability $1-\cO(d^{-2})$, 
$$\widetilde g_t^{\tw}-g_t = \widetilde\zeta_t+\widetilde\beta_t+\widetilde\gamma_t,$$
 where
$\widetilde\zeta_t=2\widetilde\zeta_t(\delta/2)-\widetilde\zeta_t(\delta)$, $\widetilde\beta_t=2\widetilde\beta_t(\delta/2)-\widetilde\beta_t(\delta)$ and $\widetilde\gamma_t=\widetilde\gamma_t(\delta/2)-\widetilde\gamma_t(\delta)$.
\label{eq:hs-debias}
\label{cor:hs-debias}
\end{corollary}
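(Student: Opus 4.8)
The plan is to treat the corollary as a direct algebraic consequence of Lemma~\ref{lem:hs-debias}, instantiated at two probing scales. First I would invoke the lemma once with probing step size $\delta$ and once with probing step size $\delta/2$. Each instantiation holds on an event of probability $1-\cO(d^{-2})$; intersecting the two events by a union bound still yields probability $1-\cO(d^{-2})$, matching the failure probability claimed in the corollary. On this common event the two expansions read
\begin{align*}
\widetilde g_t(\delta) &= g_t + \tfrac{\delta}{2}\E[(z^\top H_t z)z] + \widetilde\zeta_t(\delta) + \widetilde\beta_t(\delta) + \widetilde\gamma_t(\delta), \\
\widetilde g_t(\delta/2) &= g_t + \tfrac{\delta}{4}\E[(z^\top H_t z)z] + \widetilde\zeta_t(\delta/2) + \widetilde\beta_t(\delta/2) + \widetilde\gamma_t(\delta/2),
\end{align*}
where the coefficient $\tfrac{\delta}{4}$ in the second line comes from substituting $\delta/2$ into the $\tfrac{\delta}{2}$ prefactor.

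Next I would substitute both expansions into the definition $\widetilde g_t^{\tw} = 2\widetilde g_t(\delta/2) - \widetilde g_t(\delta)$ and collect terms by type. The leading deterministic term contributes $2 g_t - g_t = g_t$, so that $\widetilde g_t^{\tw} - g_t$ is a linear combination of the remaining pieces. The one step that is more than bookkeeping --- and the entire rationale for pairing the coefficients $(2,-1)$ with the factor-of-two change of scale --- is that the $\cO(\delta)$ bias cancels exactly:
$$
2\cdot\tfrac{\delta}{4}\E[(z^\top H_t z)z] \;-\; \tfrac{\delta}{2}\E[(z^\top H_t z)z] \;=\; 0.
$$
This is precisely the term that the singly de-biased estimator of Lemma~\ref{lem:debias} could not remove, and its elimination is what upgrades the bias from $\cO(\delta)$ to the $\cO(\delta^2)$ order carried by the $\widetilde\gamma$ remainders. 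Grouping the surviving terms by linearity then defines $\widetilde\zeta_t = 2\widetilde\zeta_t(\delta/2)-\widetilde\zeta_t(\delta)$, $\widetilde\beta_t = 2\widetilde\beta_t(\delta/2)-\widetilde\beta_t(\delta)$, and the bias remainder $\widetilde\gamma_t$ as the corresponding combination of $\widetilde\gamma_t(\delta/2)$ and $\widetilde\gamma_t(\delta)$, giving exactly the claimed identity $\widetilde g_t^{\tw} - g_t = \widetilde\zeta_t + \widetilde\beta_t + \widetilde\gamma_t$.

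Because every manipulation is linear, there is no analytic obstacle: the corollary is pure bookkeeping once Lemma~\ref{lem:hs-debias} is available. The only points needing care are (i) applying the union bound so that both the scale-$\delta$ and scale-$\delta/2$ decompositions are simultaneously valid on a single event of probability $1-\cO(d^{-2})$, and (ii) verifying the exact cancellation of the $\tfrac{\delta}{2}\E[(z^\top H_t z)z]$ term, which hinges on matching the de-biasing weights to the halving of the probing scale. I would not attempt to re-derive tail properties here: the sub-exponential parameter of $\widetilde\zeta_t$, the sub-Gaussian parameter of $\widetilde\beta_t$, and the $\ell_\infty$ bound on $\widetilde\gamma_t$ are not part of the corollary statement and are instead obtained downstream in the mirror-descent analysis by combining the two per-scale parameter bounds of Lemma~\ref{lem:hs-debias} through the triangle inequality.
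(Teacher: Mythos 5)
Your proposal is correct and follows exactly the route the paper intends (the paper gives no separate proof, treating the corollary as an immediate consequence of Lemma~\ref{lem:hs-debias} applied at scales $\delta$ and $\delta/2$, a union bound, and the exact cancellation $2\cdot\tfrac{\delta}{4}-\tfrac{\delta}{2}=0$). One small point in your favor: carrying the linearity through consistently gives $\widetilde\gamma_t=2\widetilde\gamma_t(\delta/2)-\widetilde\gamma_t(\delta)$, so the coefficient in the paper's stated formula for $\widetilde\gamma_t$ appears to be a typo (harmless for the downstream $\ell_\infty$ bound, which only changes by a constant factor).
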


\begin{figure}[t]
	\centering
	\begin{subfigure}[t]{0.23\textwidth}
		\includegraphics[width=\textwidth]{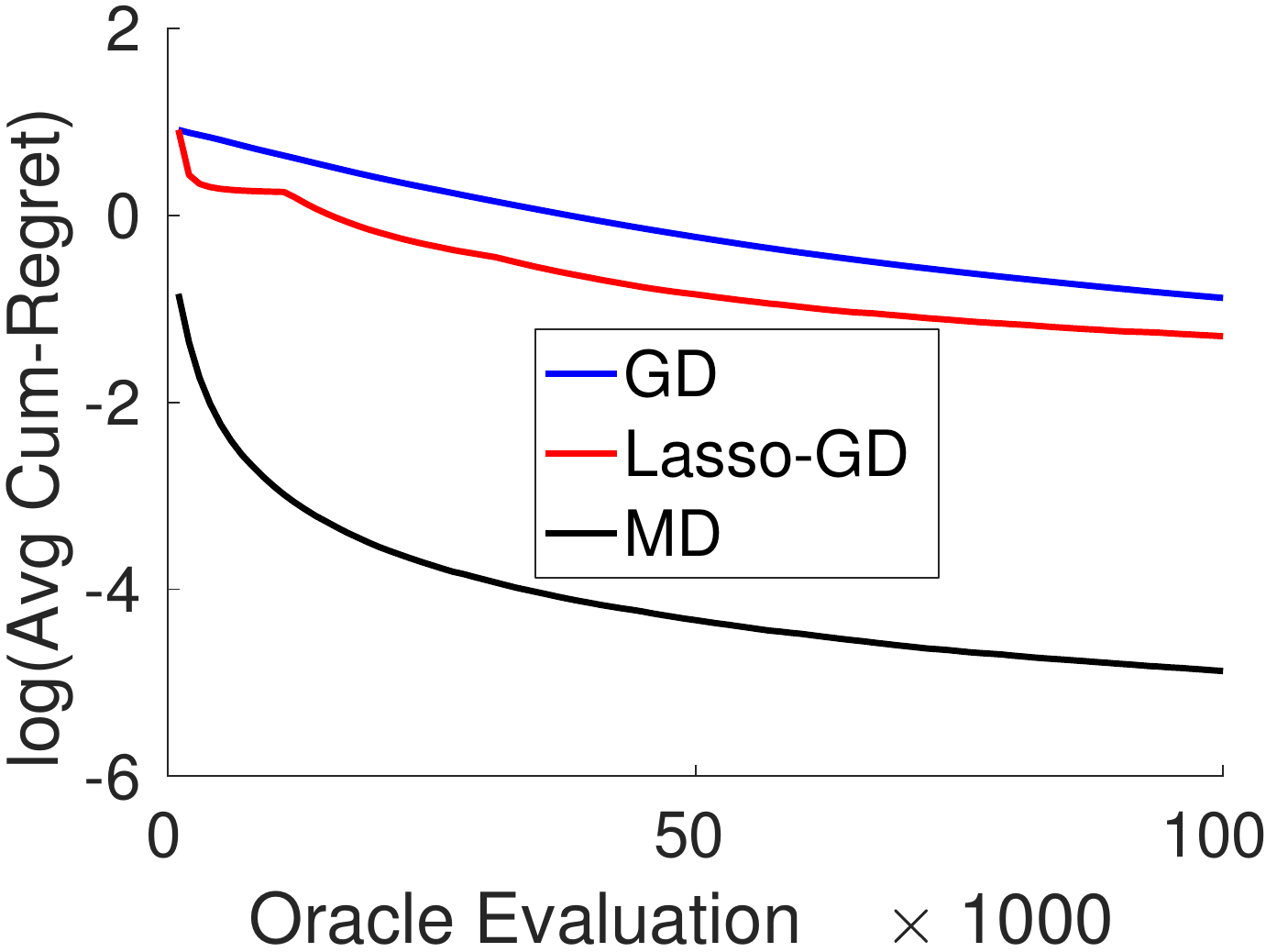}
		\caption{$s=10,d=100$}\label{fig:s10_d100}
	\end{subfigure}	
	\begin{subfigure}[t]{0.23\textwidth}
		\includegraphics[width=\textwidth]{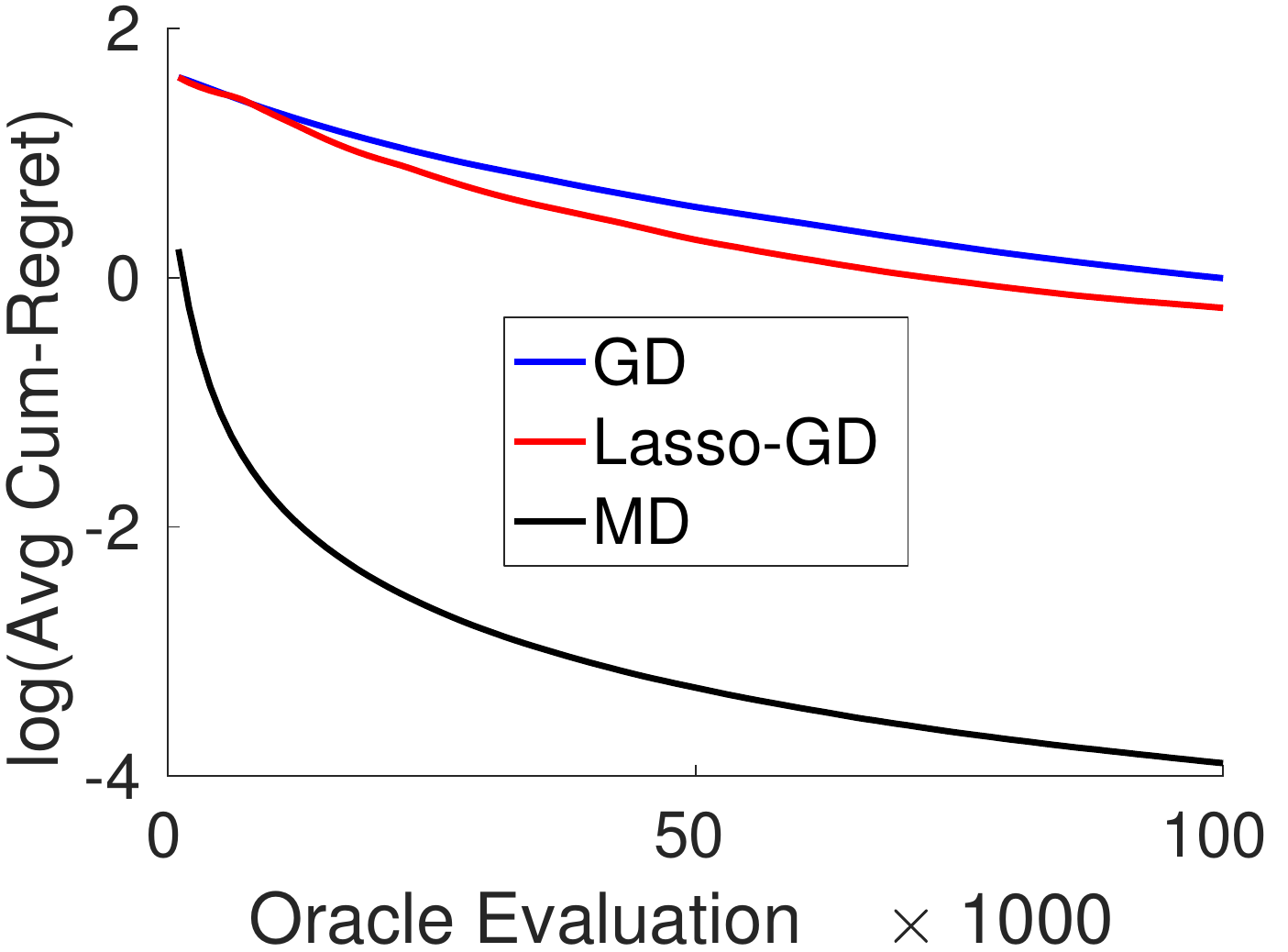}
		\caption{$s=20,d=100$}\label{fig:d20_d100}
	\end{subfigure}	
\caption{Sparse quadratic optimization with identity quadratic term.}\label{fig:identity}
\end{figure}
\begin{figure}[t]
\centering
	\begin{subfigure}[t]{0.23\textwidth}
		\includegraphics[width=\textwidth]{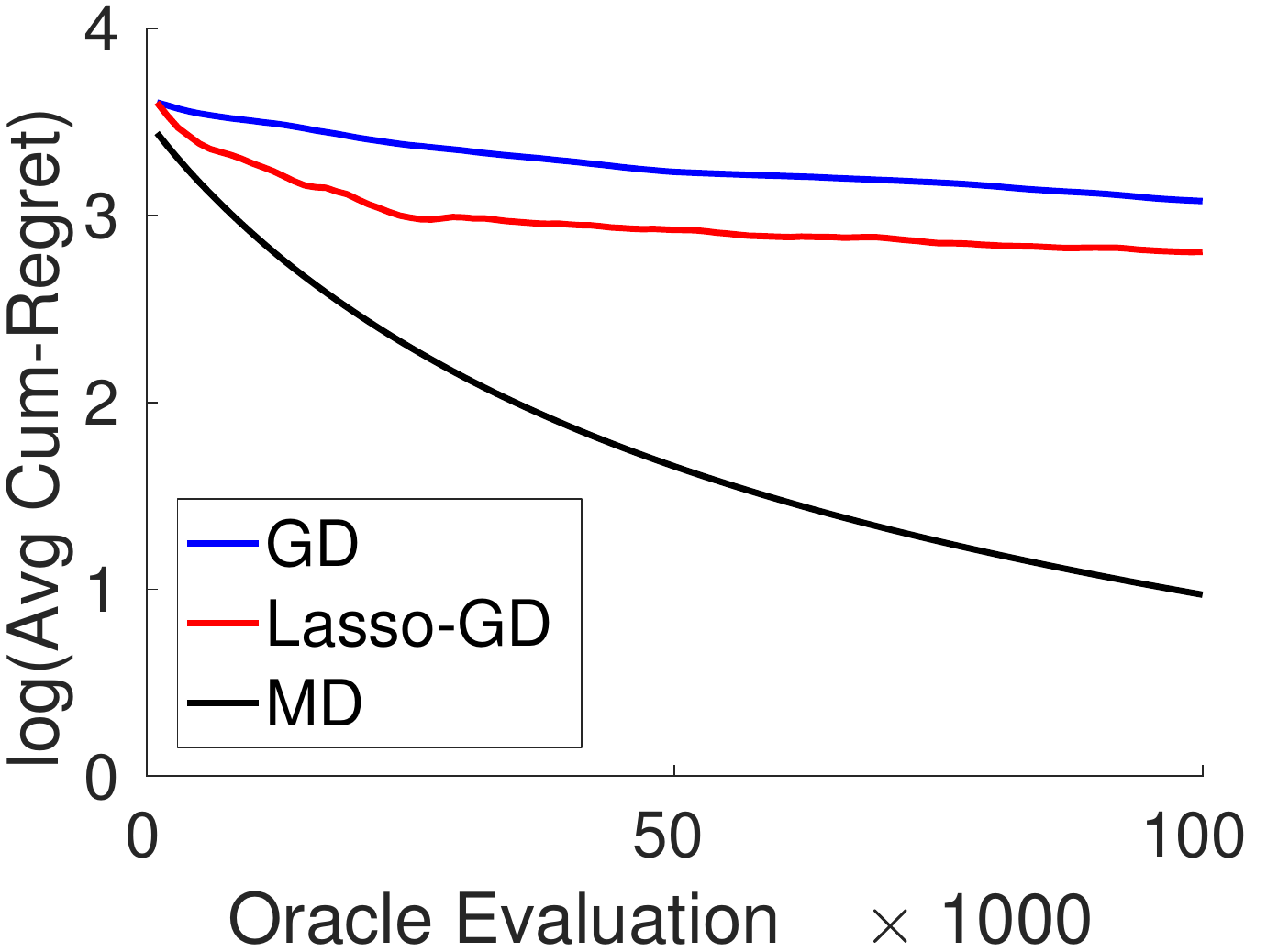}
		\caption{decay rate $=1.5$}\label{fig:decay15}
	\end{subfigure}	
	\begin{subfigure}[t]{0.23\textwidth}
		\includegraphics[width=\textwidth]{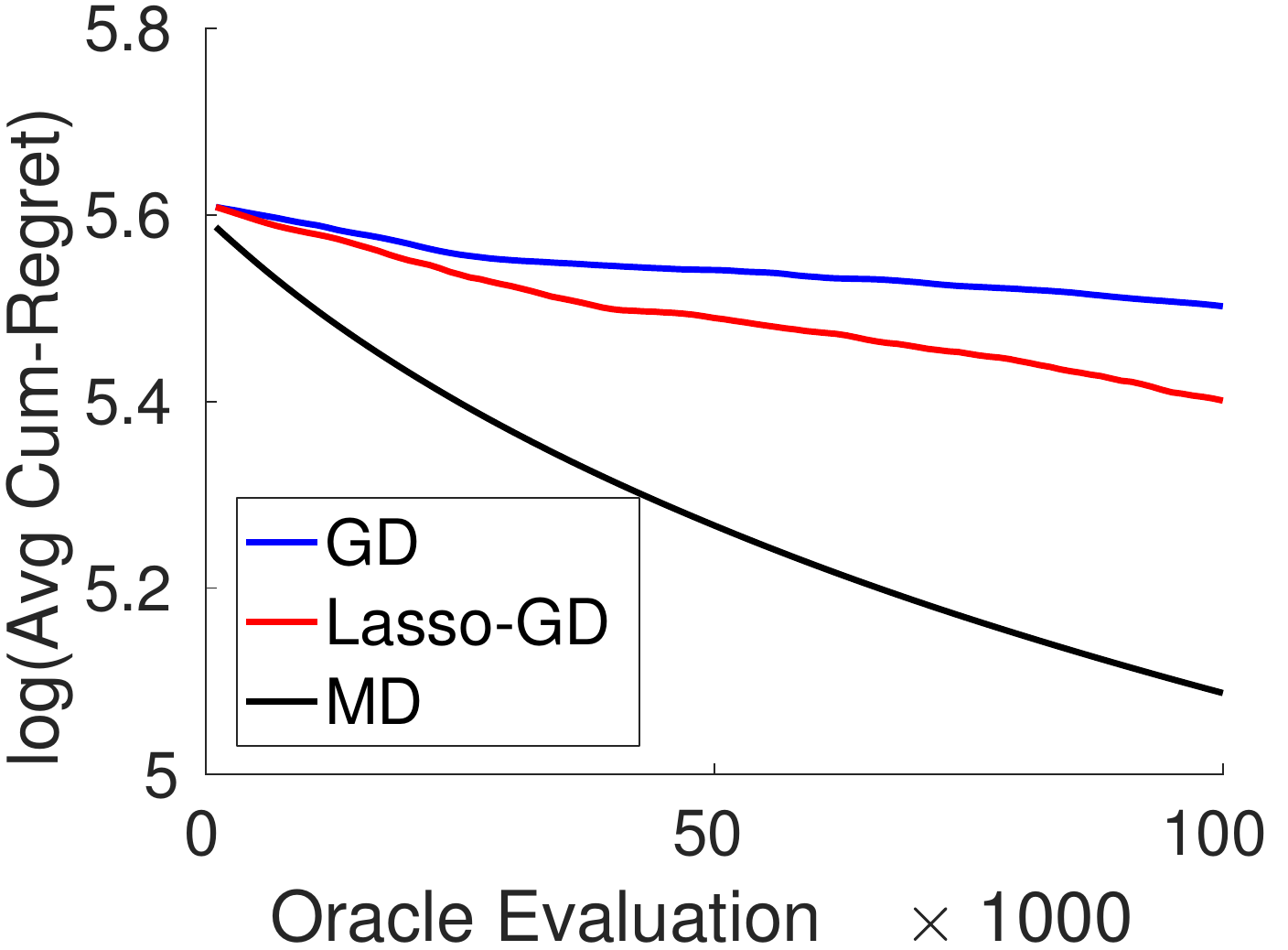}
		\caption{decay rate $=3$}\label{decay3}
	\end{subfigure}	
\caption{Sparse quadratic optimization with polynomial decay of eigenvalues.}\label{fig:polydecay}
\end{figure}
\begin{figure}[t]
	\centering
	\begin{subfigure}[t]{0.23\textwidth}
		\includegraphics[width=\textwidth]{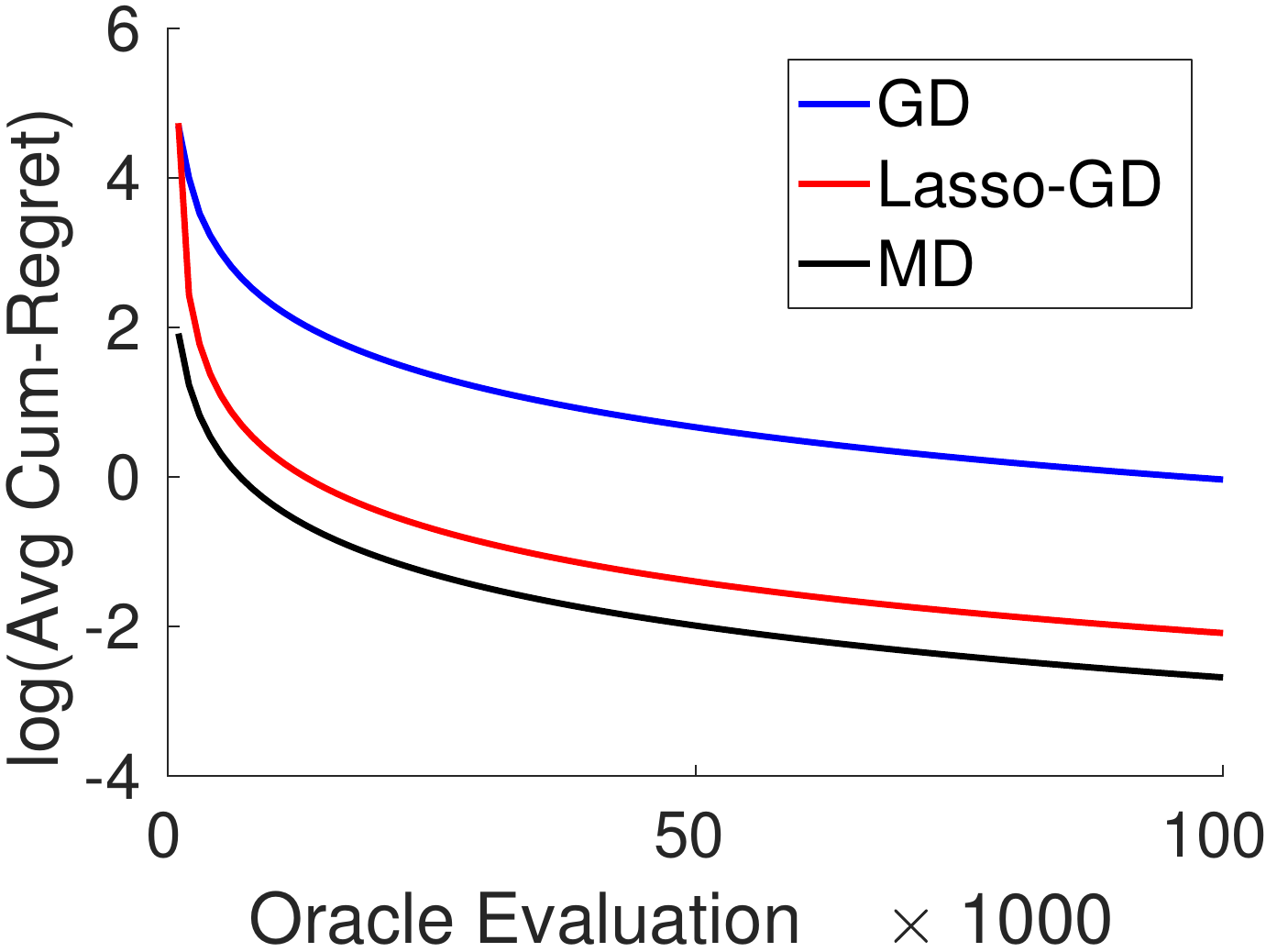}
		\caption{$s=10,d=100$}\label{fig:fourth_s10_d100}
	\end{subfigure}	
	\begin{subfigure}[t]{0.23\textwidth}
		\includegraphics[width=\textwidth]{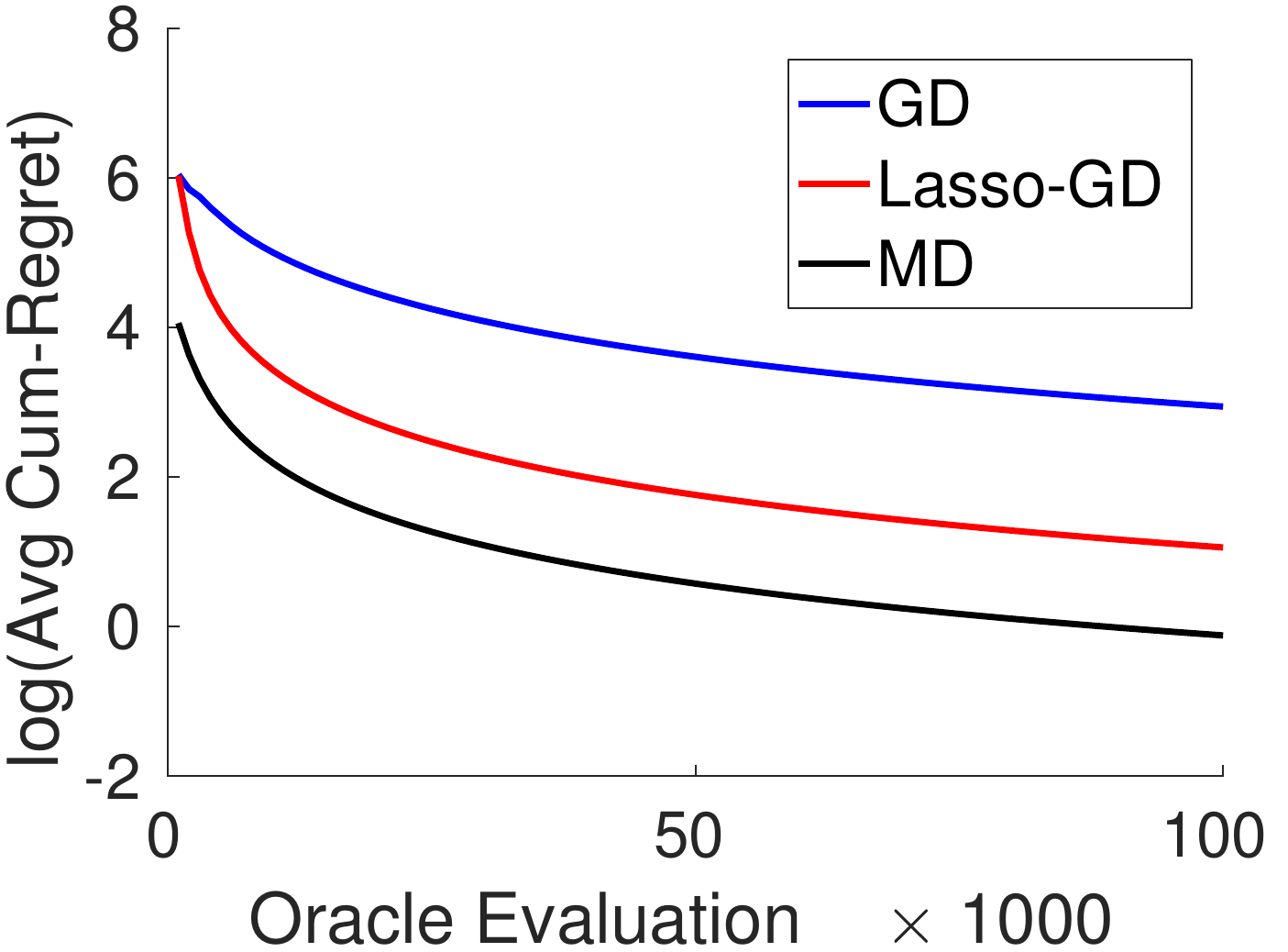}
		\caption{$s=20,d=100$}\label{fig:fourth_s20_d100}
	\end{subfigure}	
\caption{Sparse fourth-degree polynomial optimization with identity quadratic term.}\label{fig:fourth_identity}
\end{figure}

The twice de-biased estimator is, in principle, similar to the ``twicing'' trick in nonparametric kernel smoothing \citep{newey2004twicing}
that reduces estimation bias.
In particular, Corollary \ref{cor:hs-debias} shows that the $\frac{\delta}{2}\mathbb E[(z^\top H_tz)z]$ bias term is cancelled by the ``twicing'' trick,
and the remaining bias term $\widetilde\gamma$ is an order of magnitude smaller than $\gamma$ in the bias term before twicing (e.g., Lemma \ref{lem:debias}).
We also remark that the twice de-biased estimator $\widetilde g_t^{\tw}$ does \emph{not} significantly increase the computational burden,
because the method remains first-order and only (two copies of) the de-biased gradient estimate needs to be computed. 

Plugging the ``twice'' de-biased gradient estimator $\widetilde g_t^{\tw}$ into the stochastic mirror descent procedure (Algorithm \ref{alg:main})
and choosing tuning parameters $n,\lambda,\delta$ and $\eta$ appropriately, we obtain the following improved convergence rate:
\begin{theorem}
Suppose (A1) through (A4) and (A6) hold.
Suppose also that $T=\Omega(s^3\log^2 d + (1+L)^2s^2 + H^2B^2(1+L)s\log d)$ and $T\leq d$.
Let 
$\eta := Bn^{2/3}\sqrt{\frac{\log d}{T}}$, $n := \lfloor (1+L)s^{2/3}\sqrt{T}\rfloor$ and $\delta := (s\log d/n)^{1/3}$.
Then the simple regret $R_{\mathcal A}^{\simple}(T)$ can be upper bounded with probability $1-\cO(d^{-1})$ as
$$
R_{\mathcal A}^{\simple}(T) \lesssim
\widetilde\xi_{\sigma,s}B\sqrt{\log d}\left(\frac{(1+L)s^{2/3}}{T}\right)^{1/3} + \widetilde\cO(T^{-5/12}),
$$
where $\widetilde\xi_{\sigma,s} = (1+\sigma+\sigma^2/s^{2/3})$.
\label{thm:improved}
\end{theorem}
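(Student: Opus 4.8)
The plan is to run the stochastic mirror descent template of Algorithm~\ref{alg:main} but with the exact gradient replaced by the twice de-biased estimator $\widetilde g_t^{\tw}$, and to control the resulting regret using the sharp error decomposition of Corollary~\ref{cor:hs-debias}. I would fix the potential $\psi=\psi_a$ with $a=1+1/\log d$, so that $\psi_a$ is $1$-strongly convex with respect to $\|\cdot\|_a\approx\|\cdot\|_1$ and its dual norm obeys $\|z\|_{a^*}\lesssim\|z\|_\infty$ (with $1/a+1/a^*=1$); this is the geometry in which the sparse gradients and the entrywise error bounds are naturally measured. Writing $x^*$ for a minimizer with $\|x^*\|_1\le B$ (A2) and $g_t=\nabla f(x_t)$, the one-step mirror-descent inequality together with $1$-strong convexity and convexity ($f(x_t)-f^*\le\langle g_t,x_t-x^*\rangle$) yields, after telescoping and dividing by $\eta T'$,
\begin{equation*}
\frac{1}{T'}\sum_{t=0}^{T'-1}\big(f(x_t)-f^*\big)\ \le\ \frac{\Delta_\psi(x^*,x_0)}{\eta T'}+\frac{\eta}{2T'}\sum_{t}\|\widetilde g_t^{\tw}\|_{a^*}^2-\frac{1}{T'}\sum_t\langle \widetilde g_t^{\tw}-g_t,\,x_t-x^*\rangle.
\end{equation*}
Substituting $\widetilde g_t^{\tw}-g_t=\widetilde\zeta_t+\widetilde\beta_t+\widetilde\gamma_t$ from Corollary~\ref{cor:hs-debias} splits the last term into a sub-exponential martingale part, a sub-Gaussian martingale part, and a deterministic bias part.

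I would then bound the four pieces separately. The \emph{Bregman/step} term satisfies $\Delta_\psi(x^*,x_0)\lesssim B^2\log d$ with $x_0=0$, so $\Delta_\psi(x^*,x_0)/(\eta T')$ produces the leading $T^{-1/3}$ contribution once $\eta,n$ and $T'=\lfloor T/2n\rfloor$ are set as in the statement. The \emph{bias} term is controlled by H\"older, $|\langle\widetilde\gamma_t,x_t-x^*\rangle|\le\|\widetilde\gamma_t\|_\infty\|x_t-x^*\|_1\le 2B\|\widetilde\gamma_t\|_\infty$, and here the twicing pays off: Corollary~\ref{cor:hs-debias} gives $\|\widetilde\gamma_t\|_\infty\lesssim L\delta^2+\sigma s\log d/(n\delta)+s\delta H\sqrt{\log d/n}$, an order of magnitude smaller in $\delta$ than the single-de-biased bound of Lemma~\ref{lem:debias}, which is precisely what enables the improved exponent (the $s\delta H\sqrt{\log d/n}$ piece, scaled by $B$, is what produces the $\widetilde\cO(T^{-5/12})$ remainder). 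The two \emph{stochastic} terms $\sum_t\langle\widetilde\zeta_t,x_t-x^*\rangle$ and $\sum_t\langle\widetilde\beta_t,x_t-x^*\rangle$ are martingales with respect to the filtration generated by the per-epoch query noise, since by Lemma~\ref{lem:hs-debias} each summand is conditionally centered given $x_t$; plugging $a=x_t-x^*$ (so $\|a\|_2\le\|a\|_1\le 2B$) into the parameter bounds and applying a Freedman/Bernstein-type inequality for sub-exponential (resp.\ sub-Gaussian) martingale differences gives high-probability bounds of order $\sigma B\sqrt{\log d}/(\delta\sqrt{nT'})$ and $\delta HB\sqrt{\log d}/\sqrt{nT'}$, with the $\cO(d^{-1})$ failure probability obtained by a union bound over the $T'\le d$ epochs and over the event of Corollary~\ref{cor:hs-debias}.

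The main obstacle is the \emph{quadratic} term $\frac{\eta}{2T'}\sum_t\|\widetilde g_t^{\tw}\|_{a^*}^2$: the naive estimate $\|\widetilde g_t^{\tw}\|_{a^*}\lesssim\|g_t\|_\infty\le H$ costs $\tfrac{\eta}{2}(1+H)^2\asymp T^{-1/6}$, which would destroy the claimed rate. The fix is to exploit smoothness. Splitting $\|\widetilde g_t^{\tw}\|_{a^*}^2\le 2\|g_t\|_{a^*}^2+2\|\widetilde\zeta_t+\widetilde\beta_t+\widetilde\gamma_t\|_{a^*}^2$, the second summand is genuinely lower order because the error is small in $\ell_\infty$ (hence in $\|\cdot\|_{a^*}$), while for the first I would invoke the self-bounding property of smooth convex functions, $\|g_t\|_{a^*}^2\lesssim \beta\big(f(x_t)-f^*\big)$, where $\beta\lesssim H$ is the smoothness constant of $f$ in the $\|\cdot\|_a$ geometry guaranteed by the $\ell_1$-Hessian bound (A4), since $\|\nabla f(x)-\nabla f(y)\|_\infty\le H\|x-y\|_\infty\le H\|x-y\|_1$. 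This turns the offending term into $\eta\beta\cdot\frac1{T'}\sum_t(f(x_t)-f^*)$, which can be absorbed into the left-hand side provided $\eta\beta\le\tfrac12$; the horizon condition $T=\Omega(s^3\log^2 d+(1+L)^2s^2+H^2B^2(1+L)s\log d)$ is exactly what guarantees $\eta\beta$ is small. After absorption only the lower-order noise-quadratic and the already-bounded pieces remain, and collecting terms with the prescribed $\eta,n,\delta$ yields the cumulative bound $\widetilde\xi_{\sigma,s}B\sqrt{\log d}\,((1+L)s^{2/3}/T)^{1/3}+\widetilde\cO(T^{-5/12})$. Finally, since the statement concerns simple regret, I would output the Polyak average $x_{T+1}=\frac1{T'}\sum_t x_t$ and apply Jensen's inequality, $f(x_{T+1})-f^*\le\frac1{T'}\sum_t(f(x_t)-f^*)$, transferring the cumulative bound to $R^{\simple}_{\mathcal A}(T)$ as in the problem setup.
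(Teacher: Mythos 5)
Your proposal is correct and reaches the theorem by the same overall architecture as the paper — reduce simple regret to the averaged cumulative regret by convexity/Jensen, run mirror descent with the $\ell_a$ potential for $a=1+\Theta(1/\log d)$, split $\widetilde g_t^{\tw}-g_t=\widetilde\zeta_t+\widetilde\beta_t+\widetilde\gamma_t$ via Corollary~\ref{cor:hs-debias}, control the cross term by sub-exponential/sub-Gaussian martingale concentration and the bias term by H\"older, and union bound over $T'\leq d$ epochs — but it handles the quadratic term of the recursion by a genuinely different device. The paper invokes Lemma~\ref{lem:stmd-rate} (Lan's one-step inequality for \emph{smooth} mirror descent), in which the $\widetilde H$-smoothness of $f$ with respect to $\|\cdot\|_\psi$ is built into the recursion, so the quadratic penalty involves only the gradient \emph{error} $\|\widetilde g_t^{\tw}-g_t\|_{\psi^*}^2$, bounded uniformly in $t$ from the $\ell_\infty$ estimates of Lemma~\ref{lem:hs-debias} together with $d^{2(a-1)/a}=O(1)$. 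You instead use the non-smooth mirror-descent bound, which produces the full $\|\widetilde g_t^{\tw}\|_{a^*}^2$ and would naively cost $\eta H^2\asymp T^{-1/6}$, and you repair this with the self-bounding inequality $\|\nabla f(x)\|_{a^*}^2\lesssim H\big(f(x)-f^*\big)$ — legitimate here because (A4) gives $\|\nabla f(x)-\nabla f(y)\|_\infty\leq H\|x-y\|_\infty$ and the infimum is attained over all of $\mathbb R^d$ by (A1)--(A2) — followed by absorption into the left-hand side when $\eta H\leq 1/2$. The two routes impose essentially the same step-size restriction (Lan's lemma requires $\eta<\kappa/\widetilde H$, i.e.\ again $\eta H\lesssim 1$, which the paper likewise verifies from the stated lower bound on $T$) and yield the same leading term and the same $\widetilde\cO(T^{-5/12})$ remainder, which you correctly trace to the $s\delta H\sqrt{\log d/n}$ component of $\|\widetilde\gamma_t\|_\infty$. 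The paper's route is marginally cleaner in that it never needs to touch $\|g_t\|_{a^*}$; yours is more self-contained in that it does not rely on the imported smooth-MD lemma.
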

As a simple illustration consider the following example:
\begin{example}
Consider a quadratic function $f(x)=\frac{1}{2}(x-x^*)^\top Q(x-x^*)$ with (unknown) $Q\succeq 0$ being positive semi-definite and supported on $S\subseteq[d]$ with $|S|\leq s$,
meaning that $Q_{ij}=0$ if $i\notin S$ or $j\notin S$.
It is easy to verify that $f$ satisfies (A1) through (A5), and also (A6) with $L=0$ because $\nabla^2 f(x)\equiv Q$, independent of $x$.
Subsequently, applying results in Theorem \ref{thm:improved}
we obtain a convergence rate of $\cO(T^{-1/3})$ for the simple regret $R_{\mathcal A}^{\simple}(T)$.
\end{example}
More broadly, compared to Theorem \ref{thm:main}, the stochastic mirror descent algorithm with the twice de-biased gradient estimator ($\widetilde g_t^{\tw}$)
has the convergence rate of $\cO(T^{-1/3})$, which is a strict improvement over the $\cO(T^{-1/4})$ rate in Theorem \ref{thm:main}.
Such improvement is at the cost of the additional assumption of Hessian smoothness (A6);
however, the optimization algorithm remains almost unchanged and no second-order information is required at runtime.
Finally, we remark that Theorem \ref{thm:improved} only applies to the simple regret $R_{\mathcal A}^{\simple}(T)$;
we have yet to work out a similar bound for the cumulative regret 
for the particular choices of $n$ and $\delta$ in Theorem \ref{thm:improved}.

\section{SIMULATIONS}
We compare our two proposed algorithms with the baseline method for low-dimensional zeroth-order optimization (proposed in \citep{flaxman2005online})
on synthetic function examples.
We use GD to represent ``zeroth order" gradient descent algorithm proposed in~\citep{flaxman2005online}, Lasso-GD to represent Algorithm~\ref{alg:egs} and MD to represent Algorithm~\ref{alg:main}.
For our synthetic function examples, we first construct a convex low-dimensional function $f_S:\mathbb R^{|S|}\to\mathbb R$ on a uniformly chosen subset $S\subseteq[d]$ with size $s$,
and then ``extend'' $f_S$ to $f$ defined on the high-dimensional domain $\mathbb R^d$ by $f(x) \equiv f_S(x_S)$.
Functions constructed as such naturally satisfy the sparsity assumptions (A3), (A4) and (A5).
In all plots we start at the 1000th iterations (oracle evaluations) of all algorithms to avoid clutter caused by the volatile burn-in phases.
Thus, the starting points in the plots are slightly different for different algorithms.

In Figure~\ref{fig:identity} we consider sparse quadratic optimization problem with $f_S(x_S) = x_S^\top Qx_S + b^\top x_S$ where we set $Q_{ii}=1$ and $b_i=1$ for $i\in S$ and other entries to $0$.
In Figure~\ref{fig:polydecay} we consider sparse quadratic optimization problem with $f_S(x_S) = x_S^\top Qx + b^\top x_S$ where we set $Q_{ii}=i^{-\gamma}$ where $\gamma$ is the eigenvalue decay rate and $b_i=1$ for $i\in S$ and other entries to $0$.
In Figure~\ref{fig:fourth_identity} we consider sparse degree-4 polynomial optimization problem with $f_S(x) = |(x_S-b)^\top Q(x_S-b)|^2 + (x_S-b)^\top Q(x_S-b)$ where we set $Q_{ii}=1$ and $b_i=1$ for $i\in S$ and other entries to $0$.
All hyper-parameters are tuned by grid search.
The cumulative regret $R_{\mathcal A}^{\cum}(t)=\frac{1}{t}\sum_{t'=0}^{t-1}f(x_t)-f^*$ is reported for all algorithms and selected time epochs $t\leq T$.

We observe that in all our simulation settings, the vanilla gradient descent algorithm is dominated by our proposed algorithms. 
Our simulation results also suggest that the mirror descent algorithm is superior to the successive component selection algorithm.
MD is also easier to use in practice as it has fewer parameters. Thus, we recommend mirror descent algorithm for practical use.

\section{CONCLUDING REMARKS}

In this paper we consider the problem of optimizing high-dimensional functions with noisy zeroth-order oracles.
Two algorithms are proposed that work under sparsity assumptions on the gradients/Hessians or the functions themselves,
and we provide convergence bounds that only depend logarithmically on the ambient domain dimension $d$.

We view our work as a first step towards rather than the resolution of this problem.
In particular, 
in future work we hope to address the following questions:

1.~Both Algorithms \ref{alg:egs} and \ref{alg:main} require ``strong'' sparsity conditions on the input function or its gradients,
meaning that they have to be \emph{exactly} sparse.
It is an important question whether near dimension-independent convergence can be achieved with only ``weak'' sparsity assumptions, 
which only assume, for instance, that the $\ell_1$ norm of the function gradient and Hessian are bounded.
Such results, if possible, would greatly expand the applicability of the problem, 
as few functions in practice are exactly sparse.

2.~Our proposed algorithms have an $\cO(T^{-1/4})$ convergence rate, and the mirror descent algorithm converges at $\cO(T^{-1/3})$
with additional Hessian smoothness conditions.
On the other hand, in low-dimensional zeroth-order optimization it is well-understood that the optimal convergence rate is $\cO(\poly(d)T^{-1/2})$,
and there are computationally efficient algorithms achieving such rates \citep{bubeck2016kernel}.
Thus, an interesting open question is whether, under additional strong or weak sparsity conditions, 
a similar convergence rate of $\cO(\poly\log(d)T^{-1/2})$ can be achieved, 
with only poly-logarithmic dependency on the ambient dimension $d$.


\section*{ACKNOWLEDGEMENTS}

We would like to thank Renbo Zhao, Qi Lei and Yuanzhi Li for helpful discussions.
This work is supported by CMU ProSEED/BrainHub seed grant, AFRL FA8750-17-2-0212, NSF CAREER IIS1252412, and NSF DMS1713003.

\bibliographystyle{IEEE}
\bibliography{../refs}

\begin{thebibliography}{10}

\bibitem{agarwal2012information}
A.~Agarwal, P.~L. Bartlett, P.~Ravikumar, and M.~J. Wainwright.
\newblock Information-theoretic lower bounds on the oracle complexity of
  stochastic convex optimization.
\newblock {\em IEEE Transactions on Information Theory}, 58(5):3235--3249,
  2012.

\bibitem{agarwal2010optimal}
A.~Agarwal, O.~Dekel, and L.~Xiao.
\newblock Optimal algorithms for online convex optimization with multi-point
  bandit feedback.
\newblock In {\em Proceedings of the annual Conference on Learning Theory
  (COLT)}, 2010.

\bibitem{agarwal2013stochastic}
A.~Agarwal, D.~Foster, D.~Hsu, S.~Kakade, and A.~Rakhlin.
\newblock Stochastic convex optimization with bandit feedback.
\newblock {\em SIAM Journal on Optimization}, 23(1):213--240, 2013.

\bibitem{bandeira2012computation}
A.~S. Bandeira, K.~Scheinberg, and L.~N. Vicente.
\newblock Computation of sparse low degree interpolating polynomials and their
  application to derivative-free optimization.
\newblock {\em Mathematical Programming}, 134:223--257, 2012.

\bibitem{bandeira2014convergence}
A.~S. Bandeira, K.~Scheinberg, and L.~N. Vicente.
\newblock Convergence of trust-region methods based on probabilistic models.
\newblock {\em SIAM Journal on Optimization}, 24(3):1238--1264, 2014.

\bibitem{beck2003mirror}
A.~Beck and M.~Teboulle.
\newblock Mirror descent and nonlinear projected subgradient methods for convex
  optimization.
\newblock {\em Operations Research Letters}, 31(3):167--175, 2003.

\bibitem{bickel2009simultaneous}
P.~J. Bickel, Y.~Ritov, and A.~B. Tsybakov.
\newblock Simultaneous analysis of lasso and dantzig selector.
\newblock {\em The Annals of Statistics}, 37(4):1705--1732, 2009.

\bibitem{bubeck2016kernel}
S.~Bubeck, R.~Eldan, and Y.~T. Lee.
\newblock Kernel-based methods for bandit convex optimization.
\newblock In {\em Proceedings of the annual ACM SIGACT Symposium on Theory of
  Computing (STOC)}, 2017.

\bibitem{bull2011convergence}
A.~D. Bull.
\newblock Convergence rates of efficient global optimization algorithms.
\newblock {\em Journal of Machine Learning Research}, 12(Oct):2879--2904, 2011.

\bibitem{candes2006robust}
E.~J. Cand{\`e}s, J.~Romberg, and T.~Tao.
\newblock Robust uncertainty principles: Exact signal reconstruction from
  highly incomplete frequency information.
\newblock {\em IEEE Transactions on Information Theory}, 52(2):489--509, 2006.

\bibitem{donoho2006compressed}
D.~L. Donoho.
\newblock Compressed sensing.
\newblock {\em IEEE Transactions on Information Theory}, 52(4):1289--1306,
  2006.

\bibitem{duchi2010composite}
J.~C. Duchi, S.~Shalev-Shwartz, Y.~Singer, and A.~Tewari.
\newblock Composite objective mirror descent.
\newblock In {\em Proceedings of the annual Conference on Learning Theory
  (COLT)}, 2010.

\bibitem{flaxman2005online}
A.~D. Flaxman, A.~T. Kalai, and H.~B. McHanan.
\newblock Online convex optimization in the bandit setting: gradient descent
  without a gradient.
\newblock In {\em Proceedings of the ACM-SIAM Symposium on Discrete Algorithms
  (SODA)}, 2005.

\bibitem{ghadimi2012optimal}
S.~Ghadimi and G.~Lan.
\newblock Optimal stochastic approximation algorithms for strongly convex
  stochastic composite optimization {I}: A generic algorithmic framework.
\newblock {\em SIAM Journal on Optimization}, 22(4):1469--1492, 2012.

\bibitem{ghadimi2013optimal}
S.~Ghadimi and G.~Lan.
\newblock Optimal stochastic approximation algorithms for strongly convex
  stochastic composite optimization, {II}: shrinking procedures and optimal
  algorithms.
\newblock {\em SIAM Journal on Optimization}, 23(4):2061--2089, 2013.

\bibitem{hazan2014bandit}
E.~Hazan and K.~Levy.
\newblock Bandit convex optimization: Towards tight bounds.
\newblock In {\em Proceedings of Advances in Neural Information Processing
  Systems (NIPS)}, 2014.

\bibitem{hoeffding1963probability}
W.~Hoeffding.
\newblock Probability inequalities for sums of bounded random variables.
\newblock {\em Journal of the American Statistical Association},
  58(301):13--30, 1963.

\bibitem{jamieson2012query}
K.~G. Jamieson, R.~Nowak, and B.~Recht.
\newblock Query complexity of derivative-free optimization.
\newblock In {\em Proceedings of Advances in Neural Information Processing
  Systems (NIPS)}, 2012.

\bibitem{javanmard2014confidence}
A.~Javanmard and A.~Montanari.
\newblock Confidence intervals and hypothesis testing for high-dimensional
  regression.
\newblock {\em Journal of Machine Learning Research}, 15(1):2869--2909, 2014.

\bibitem{johnson2013accelerating}
R.~Johnson and T.~Zhang.
\newblock Accelerating stochastic gradient descent using predictive variance
  reduction.
\newblock In {\em Proceedings of Advances in neural information processing
  systems (NIPS)}, 2013.

\bibitem{knight2000asymptotics}
K.~Knight and W.~Fu.
\newblock Asymptotics for lasso-type estimators.
\newblock {\em The Annals of statistics}, 28(5):1356--1378, 2000.

\bibitem{lan2012optimal}
G.~Lan.
\newblock An optimal method for stochastic composite optimization.
\newblock {\em Mathematical Programming}, 133:365--397, 2012.

\bibitem{leeds2014exploration}
D.~D. Leeds, J.~A. Pyles, and M.~J. Tarr.
\newblock Exploration of complex visual feature spaces for object perception.
\newblock {\em Frontiers in computational neuroscience}, 8, 2014.

\bibitem{lei2017doubly}
Q.~Lei, I.~E.-H. Yen, C.-y. Wu, I.~S. Dhillon, and P.~Ravikumar.
\newblock Doubly greedy primal-dual coordinate descent for sparse empirical
  risk minimization.
\newblock In {\em Proceedings of the International Conference on Machine
  Learning (ICML)}, 2017.

\bibitem{lounici2008supnorm}
K.~Lounici.
\newblock Sup-norm convergence rate and sign concentration property of lasso
  and dantzig estimators.
\newblock {\em Electronic Journal of Statistics}, 2:90--102, 2008.

\bibitem{nakamura2017design}
N.~Nakamura, J.~Seepaul, J.~B. Kadane, and B.~Reeja-Jayan.
\newblock Design for low-temperature microwave-assisted crystallization of
  ceramic thin films.
\newblock {\em Applied Stochastic Models in Business and Industry}, 2017.

\bibitem{nemirovski2009robust}
A.~Nemirovski, A.~Juditsky, G.~Lan, and A.~Shapiro.
\newblock Robust stochastic approximation approach to stochastic programming.
\newblock {\em SIAM Journal on Optimization}, 19(4):1574--1609, 2009.

\bibitem{nemirovski1983problem}
A.~Nemirovski and D.~Yudin.
\newblock {\em Problem complexity and method efficiency in optimization}.
\newblock A Wiley-Interscience Publication, 1983.

\bibitem{newey2004twicing}
W.~K. Newey, F.~Hsieh, and J.~M. Robins.
\newblock Twicing kernels and a small bias property of semiparametric
  estimators.
\newblock {\em Econometrica}, 72(3):947--962, 2004.

\bibitem{raskutti2011minimax}
G.~Raskutti, M.~J. Wainwright, and B.~Yu.
\newblock Minimax rates of estimation for high-dimensional linear regression
  over {Lq}-balls.
\newblock {\em IEEE Transactions on Information Theory}, 57(10):6976--6994,
  2011.

\bibitem{reeja2012microwave}
B.~Reeja-Jayan, K.~L. Harrison, K.~Yang, C.-L. Wang, A.~Yilmaz, and
  A.~Manthiram.
\newblock Microwave-assisted low-temperature growth of thin films in solution.
\newblock {\em Scientific reports}, 2, 2012.

\bibitem{scarlett2017lower}
J.~Scarlett, I.~Bogunovic, and V.~Cevher.
\newblock Lower bounds on regret for noisy gaussian process bandit
  optimization.
\newblock In {\em Proceedings of the annual conference on Learning Theory
  (COLT)}, 2017.

\bibitem{shwartz2010trading}
S.~Shalev-Shwartz, N.~Srebro, and T.~Zhang.
\newblock Trading accuracy for sparsity in optimization problems with sparsity
  constraints.
\newblock {\em SIAM Journal on Optimization}, 20(6):2807--2832, 2010.

\bibitem{shalev2011stochastic}
S.~Shalev-Shwartz and A.~Tewari.
\newblock Stochastic methods for l1-regularized loss minimization.
\newblock {\em Journal of Machine Learning Research}, 12(Jun):1865--1892, 2011.

\bibitem{shamir2013complexity}
O.~Shamir.
\newblock On the complexity of bandit and derivative-free stochastic convex
  optimization.
\newblock In {\em Proceedings of the annual Conference on Learning Theory
  (COLT)}, 2013.

\bibitem{snoek2012practical}
J.~Snoek, H.~Larochelle, and R.~P. Adams.
\newblock Practical bayesian optimization of machine learning algorithms.
\newblock In {\em Proceedings of Advances in Neural Information Processing
  Systems (NIPS)}, 2012.

\bibitem{srebro2011universality}
N.~Srebro, K.~Sridharan, and A.~Tewari.
\newblock On the universality of online mirror descent.
\newblock In {\em Proceedings of Advanced in Neural Information Processing
  Systems (NIPS)}, 2011.

\bibitem{tibshirani1996regression}
R.~Tibshirani.
\newblock Regression shrinkage and selection via the lasso.
\newblock {\em Journal of the Royal Statistical Society. Series B (Statistical
  Methodology)}, 58(1):267--288, 1996.

\bibitem{van2014asymptotically}
S.~Van~de Geer, P.~B{\"u}hlmann, Y.~Ritov, R.~Dezeure, et~al.
\newblock On asymptotically optimal confidence regions and tests for
  high-dimensional models.
\newblock {\em The Annals of Statistics}, 42(3):1166--1202, 2014.

\bibitem{victor1999general}
H.~Victor.
\newblock A general class of exponential inequalities for martingales and
  ratios.
\newblock {\em The Annals of Probability}, 27(1):537--564, 1999.

\bibitem{wainwright2009sharp}
M.~J. Wainwright.
\newblock Sharp thresholds for high-dimensional and noisy sparsity recovery
  using {L1}-constrained quadratic programming ({Lasso}).
\newblock {\em IEEE Transactions on InformationTtheory}, 55(5):2183--2202,
  2009.

\bibitem{xiao2013proximal}
L.~Xiao and T.~Zhang.
\newblock A proximal-gradient homotopy method for the sparse least-squares
  problem.
\newblock {\em SIAM Journal on Optimization}, 23(2):1062--1091, 2013.

\bibitem{zhang2014confidence}
C.-H. Zhang and S.~S. Zhang.
\newblock Confidence intervals for low dimensional parameters in high
  dimensional linear models.
\newblock {\em Journal of the Royal Statistical Society: Series B (Statistical
  Methodology)}, 76(1):217--242, 2014.

\bibitem{zhao2006model}
P.~Zhao and B.~Yu.
\newblock On model selection consistency of lasso.
\newblock {\em Journal of Machine learning research}, 7(Nov):2541--2563, 2006.

\end{thebibliography}

\clearpage
\onecolumn

\appendix

\section*{APPENDIX: PROOFS}

\subsection*{Proof of Lemma \ref{lem:lasso}}

We first prove a technical lemma that bounds the $\ell_{\infty}$ norm of error vectors.
\begin{lemma}
For any $x\in\mathbb R^d$ and $z_i\in\{\pm 1\}^d$, 
with probability $1-\cO(d^{-3})$ (conditioned on $x_t$ and $z_i$)
$$
\left\|\sum_{i=1}^n{\varepsilon_i z_i}\right\|_{\infty} \lesssim \frac{\sigma}{\delta}\sqrt{\frac{\log d}{n}} + H\delta.
$$
\label{lem:varepsilon}
\end{lemma}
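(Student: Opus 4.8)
The plan is to decompose $\varepsilon_i$ into its (conditionally) deterministic bias part and its stochastic part, control the former by a uniform deterministic bound and the latter by a sub-Gaussian maximal inequality. Recall from \eqref{eq:gradient-model} that $\varepsilon_i = b_i + \delta^{-1}\xi_i$, where $b_i := \frac{\delta}{2}z_i^\top H_t(\kappa_i,z_i)z_i$ is the second-order Lagrangian remainder and $\delta^{-1}\xi_i$ is the rescaled observation noise (which I take to be sub-Gaussian with parameter $\sigma^2$). Since the bound is stated conditionally on $x_t$ and on $z_1,\dots,z_n$, each $b_i$ is a fixed scalar and the only remaining randomness is in the independent, centered $\xi_i$. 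Writing the claim for the normalized sum $\frac1n\sum_i\varepsilon_iz_i$ (the quantity that actually enters the Lasso optimality conditions in Lemma~\ref{lem:lasso}, and whose rate matches the stated bound), the $j$-th coordinate is
\begin{equation*}
\Big[\tfrac1n\sum_{i=1}^n \varepsilon_i z_i\Big]_j = \frac1n\sum_{i=1}^n b_i z_{ij} + \frac{1}{\delta n}\sum_{i=1}^n \xi_i z_{ij},
\end{equation*}
and I would bound the two sums separately before applying the triangle inequality and a union bound over $j\in[d]$.

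For the bias sum I would invoke (A4) together with $z_i\in\{\pm1\}^d$: for any matrix $A$ and any $z\in\{\pm1\}^d$, $|z^\top A z| = |\sum_{j,k}A_{jk}z_jz_k|\le\sum_{j,k}|A_{jk}| = \|A\|_1$, so $|b_i|\le\frac{\delta}{2}\|H_t(\kappa_i,z_i)\|_1\le\frac{\delta H}{2}$ uniformly in $i$, \emph{regardless} of the data-dependent evaluation point $x_t+\kappa_i\delta z_i$. Hence $|\frac1n\sum_i b_i z_{ij}|\le\frac{\delta H}{2}$ for every $j$, which produces the $\delta H$ term. The point requiring care here is precisely the subtlety flagged after Lemma~\ref{lem:lasso}: $b_i$ is generally nonzero in expectation and its evaluation point depends on $z_i$, so it cannot be treated as a concentrating quantity; the uniform deterministic bound furnished by the \emph{entrywise} $\ell_1$ control (A4) is exactly what sidesteps this.

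For the stochastic sum, conditioning on $z_1,\dots,z_n$ makes $\frac{1}{\delta n}\sum_i\xi_i z_{ij}$ a sum of independent centered sub-Gaussian variables with parameter $\sigma^2 z_{ij}^2/(\delta n)^2 = \sigma^2/(\delta n)^2$ (using $z_{ij}^2=1$), hence itself sub-Gaussian with parameter $\sigma^2/(\delta^2 n)$; note that conditioning on the design removes any dependence of this parameter on the $z_i$. The sub-Gaussian tail bound gives $\Pr[\,|\frac{1}{\delta n}\sum_i\xi_i z_{ij}|>t\,]\le 2\exp(-t^2\delta^2 n/(2\sigma^2))$, and choosing $t\asymp\frac{\sigma}{\delta}\sqrt{\log d/n}$ with a sufficiently large constant makes each coordinate's failure probability $\cO(d^{-4})$. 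A union bound over the $d$ coordinates then yields $\max_j|\frac{1}{\delta n}\sum_i\xi_i z_{ij}|\lesssim\frac{\sigma}{\delta}\sqrt{\log d/n}$ with probability $1-\cO(d^{-3})$, and combining with the deterministic bias bound gives the claimed estimate.

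Overall the argument is essentially routine once the bias/noise split is in place, so I do not expect a genuine obstacle; the only places demanding attention are (i) calibrating the constant in $t$ so that the per-coordinate tail is $\cO(d^{-4})$ and survives the union bound at level $\cO(d^{-3})$, and (ii) recognizing that the quadratic-form bound $|z^\top A z|\le\|A\|_1$ for sign vectors is what lets (A4) tame the non-centered remainder term without any concentration.
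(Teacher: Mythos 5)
Your proof is correct and follows essentially the same route as the paper's: split $\varepsilon_i$ into the Lagrangian-remainder bias, bounded deterministically by $\delta H$ via $|z^\top A z|\le\|A\|_1$ for sign vectors under (A4), and the rescaled noise, handled by sub-Gaussian concentration plus a union bound over coordinates. Your explicit remark that the bound really concerns the normalized sum $\frac1n\sum_i\varepsilon_i z_i$ (the quantity entering the Lasso optimality conditions) matches what the paper's displayed decomposition implicitly assumes.
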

\begin{proof}
Let $\bar\xi_i=\xi_i/\delta\sim\mathcal N(0,\sigma^2/\delta^2)$.
Consider the following decomposition:
$$
\left\|\sum_{i=1}^n{\varepsilon_i z_i}\right\|_{\infty} \leq \frac{1}{n\delta}\left\|\sum_{i=1}^n{\bar\xi_i z_i}\right\|_{\infty} + \delta\cdot\sup_{1\leq i\leq n}\big|z_i^\top H_t(\kappa_i, z_i)z_i\big|\cdot \|z_i\|_{\infty}.
$$
The second term on the right-hand side of the above inequality is upper bounded by $\cO(H\delta)$ almost surely,
because $\|z_i\|_{\infty}\leq 1$ and $|z_i^\top H_t(\kappa_i, z_i)z_i| \leq \|H_t(\kappa_i, z_i)\|_1\|z_i\|_{\infty}^2\leq H$.
For the first term, because $\bar\xi_i$ are centered sub-Gaussian random variables independent of $z_i$ and $\|z_i\|_{\infty}\leq 1$,
we have that $1/n\cdot\|\sum_{i=1}^n{\bar\xi_i z_i}\|_{\infty}\lesssim \sqrt{\sigma^2\log d/n}$ with probability $1-\cO(d^{-3})$,
by invoking standard sub-Gaussian concentration inequalities.
\end{proof}

Now define $\widehat\theta = (\widehat g_t,\widehat\mu_t)$, $\theta_0=(g_t,\delta^{-1}f(x_t))$ and $\bar Z=(\bar z_1,\ldots,\bar z_n)$ where $\bar z_i=(z_i,1)\in\mathbb R^{d+1}$.
Define also that $Y=(\widetilde y_1,\ldots,\widetilde y_n)$.
The estimator can then be written as $\widehat\theta = \arg\min_{\theta\in\mathbb R^{d+1}}\frac{1}{n}\|\widetilde Y-\bar Z\theta\|_2^2+\lambda\|\theta\|_1$
where $\widetilde Y=\bar Z\theta_0+\varepsilon$, $\varepsilon=(\varepsilon_1,\ldots,\varepsilon_n)$.
We first establish a ``basic inequality'' type results that are essential in performance analysis of Lasso type estimators.
By optimality of $\widehat\theta$, we have that
$$
\frac{1}{n}\|Y-\bar Z\widehat\theta\|_2^2+\lambda\|\widehat\theta\|_1\leq \frac{1}{n}\|Y-\bar Z\theta_0\|_2^2+\lambda\|\theta_0\|_1 = \frac{1}{n}\|\varepsilon\|_2^2+\lambda\|\theta_0\|_1.
$$
Re-organizing terms we obtain
$$
\lambda\|\widehat\theta\|_1 \leq \lambda\|\theta_0\|_1 + \frac{2}{n}(\widehat\theta-\theta_0)^\top \bar Z^\top\varepsilon.
$$
On the other hand, by H\"{o}lder's inequality and Lemma \ref{lem:varepsilon} we have, with probability $1-\cO(d^{-2})$,
$$
\frac{2}{n}(\widehat\theta-\theta_0)^\top \bar Z^\top\varepsilon \leq 2\|\widehat\theta-\theta_0\|_1\cdot \left\|\frac{1}{n}\bar Z^\top\varepsilon\right\|_{\infty}
\lesssim \|\widehat\theta-\theta_0\|_{1}\cdot \left(\frac{\sigma}{\delta}\sqrt{\frac{\log d}{n}} + H\delta\right).
$$
Subsequently, if $\lambda\leq c_0(\sigma\delta^{-1}\sqrt{\log d/n}+H\delta)$ for some sufficiently small $c_0>0$,
we have that $\|\widehat\theta\|_1\leq \|\theta_0\|_1 + 1/2\|\widehat\theta-\theta_0\|_1$.
Multiplying by 2 and adding $\|\widehat\theta-\theta_0\|_1$ on both sides of the inequality we obtain
$\|\widehat\theta-\theta_0\|_1\leq 2(\|\widehat\theta-\theta_0\|_1+\|\widehat\theta_0\|_1-\|\widehat\theta\|_1)$.
Recall that $\theta_0$ is sparse and let $\bar S=S\cup\{d+1\}$ be the support of $\theta_0$.
We then have $\|(\widehat\theta-\theta_0)_{\bar S^c}+\|(\theta_0)_{\bar S^c}\|_1-\|\widehat\theta_{\bar S^c}\|_1=0$
and hence $\|(\widehat\theta-\theta_0)_{\bar S^c}\|_1-\|(\widehat\theta-\theta_0)_{\bar S}\|_1 \leq \|\widehat\theta-\theta_0\|_1 \leq 2\|(\widehat\theta-\theta_0)_{\bar S}\|_1$.
Thus,
\begin{equation}
\|(\widehat\theta-\theta_0)_{\bar S^c}\|_1 \leq 3\|(\widehat\theta-\theta_0)_{\bar S}\|_1.
\label{eq:basic-ineq}
\end{equation}

Now consider $\widehat\theta$ that minimizes $\frac{1}{n}\|Y-\bar Z\theta\|_2^2+\lambda\|\theta\|_1$.
By KKT condition we have that
$$
\left\|\frac{1}{n}\bar Z^\top(Y-\bar Z\widehat\theta)\right\|_{\infty} \leq \frac{\lambda}{2}.
$$
Define $\widehat\Sigma=\frac{1}{n}\bar Z^\top\bar Z$ and recall that $Y=\bar Z\theta_0+\varepsilon$.
Invoking Lemma \ref{lem:varepsilon} and the scaling of $\lambda$ we have that, with probability $1-\cO(d^{-2})$
\begin{equation}
\|\widehat\Sigma(\widehat\theta-\theta_0)\|_{\infty} \leq \frac{\lambda}{2} + \left\|\frac{1}{n}\bar Z^\top\varepsilon\right\| \lesssim \frac{\sigma}{\delta}\sqrt{\frac{\log d}{n}} + \delta H.
\label{eq:kkt}
\end{equation}
By definition of $\{\bar z_i\}_{i=1}^n$, we know that $\widehat\Sigma_{jj}=1$ for all $j=1,\ldots,d+1$ and $\mathbb E[\widehat\Sigma_{jk}]=0$ for $j\neq k$.
By Hoeffding's inequality \citep{hoeffding1963probability} and union bound we have that with probability $1-\cO(d^{-2})$, $\|\widehat\Sigma-I_{(d+1)\times(d+1)}\|_{\infty}\lesssim \sqrt{\log d/n}$, where $\|\cdot\|_{\infty}$ denotes the maximum absolute value of matrix entries.
Also note that $\widehat\theta-\theta_0$ satisfies $\|(\widehat\theta-\theta_0)_{\bar S^c}\|_1\leq 3\|(\widehat\theta-\theta_0)_{\bar S}\|_1$ thanks to Eq.~(\ref{eq:basic-ineq}).
Subsequently, 
\begin{align}
\|\widehat\theta-\theta_0\|_{\infty}
&\leq \|\widehat\Sigma(\widehat\theta-\theta_0)\|_{\infty} + \|(\widehat\Sigma-I)(\widehat\theta-\theta_0)\|_{\infty}\nonumber\\
&\leq \|\widehat\Sigma(\widehat\theta-\theta_0)\|_{\infty} + \|\widehat\Sigma-I\|_{\infty}\|\widehat\theta-\theta_0\|_1\nonumber\\
&\leq \|\widehat\Sigma(\widehat\theta-\theta_0)\|_{\infty} + \|\widehat\Sigma-I\|_{\infty}\cdot 4\|(\widehat\theta-\theta_0)_{\bar S}\|_1\nonumber\\
&\leq \|\widehat\Sigma(\widehat\theta-\theta_0)\|_{\infty} + \|\widehat\Sigma-I\|_{\infty}\cdot 4(s+1)\|\widehat\theta-\theta_0\|_{\infty}\nonumber\\
&\lesssim \frac{\sigma}{\delta}\sqrt{\frac{\log d}{n}} + \delta H + \sqrt{\frac{s^2\log d}{n}}\cdot \|\widehat\theta-\theta_0\|_{\infty}.
\label{eq:kkt-decompose}
\end{align}
Combining Eq.~(\ref{eq:kkt-decompose}) together with the scaling $n=\Omega(s^2\log d)$ we complete the proof of Lemma \ref{lem:lasso}.
Note that the statement on the $\ell_1$ error $\|\widehat\theta-\theta_0\|_1$ is a simple consequence of the basic inequality Eq.~(\ref{eq:basic-ineq}).

\subsection*{Proof of Theorem \ref{thm:model-selection-rate}}

The basis of our algorithm is the analysis of the finite-difference algorithm proposed by \cite{flaxman2005online} under low dimensions.
In particular, applying the analysis in \citep{agarwal2010optimal} for low-dimensional strongly smooth functions, we have for every epoch $t<s$
\begin{equation*}
\mathbb E[f(x_t)] - \inf_{x\in\widetilde{\mathcal X},x_{\widehat S_t^c}=0}f(x) \lesssim \poly(s,\sigma, H, \|x^*_{\widehat S_t}\|_1)\cdot T^{-1/3},
\end{equation*}
where $x_t$ is the solution point at the $t$th epoch in Algorithm \ref{alg:egs} 
and $\poly(\cdot)$ is any polynomial function of constant degrees.
Recall that $\|x^*_{\widehat S_t}\|_1 \leq \|x^*\|_1 \leq B$ by Assumption (A2). 
Using Markov's inequality we have that with probability 0.9,
\begin{equation}
f(x_t) - \inf_{x\in\widetilde{\mathcal X},x_{\widehat S_t^c}=0}f(x) \lesssim \poly(s,\sigma, H, \|x^*_{\widehat S_t}\|_1)\cdot T^{-1/3}, \;\;\;\;\;\forall t=0,\ldots,s.
\label{eq:lowdim-regret}
\end{equation}

We are now ready to prove Theorem \ref{thm:model-selection-rate}.
Let $\widehat S=\widehat S_t$ be the subset when Algorithm \ref{alg:egs} terminates.
In the rest of the proof we assume the conclusions in Corollary \ref{cor:model-selection} and Lemma \ref{lem:lasso} hold, which happens with probability $1-\cO(d^{-1})$.
Define $\Delta S=S\backslash\widehat S$, 
$x^* := \inf_{x\in\mathcal X}f(x)$ and $x_t^*=\inf_{x\in\widetilde{\mathcal X}, x_{\widehat S_t^c}=0}f(x)$.
Assumption (A5) implies that $x^*$ can be chosen such that $x^*_{S^c}=0$.
Also, if $\Delta_S=\emptyset$ we know that $x_t^*=x^*$ and Theorem \ref{thm:model-selection-rate} automatically holds due to Eq.~(\ref{eq:lowdim-regret}).
Therefore in the rest of the proof we shall assume that $\Delta_S\neq\emptyset$.

Because $\Delta_S\neq\emptyset$ and $|S|=s$, we must have $|\widehat S_t|<s$.
From the description of Algorithm \ref{alg:egs}, it can only happen with $\widehat S_t=\widehat S_{t-1}$.
We then have that
\begin{align}
f(x_{T+1})-f(x^*)
&= f(x_{t-1}^*)-f(x^*) + f(\widehat x_{t-1})-f(x_{t-1}^*)\nonumber\\
&\leq f(x_{t-1}^*)-f(x^*) + \poly(s,\sigma, H,\|x^*\|_1)\cdot T^{-1/3}\label{eq:markov}\\
&\leq \nabla f(x_{t-1}^*)^\top(x_{t-1}^*-x^*) + \poly(s,\sigma, H,\|x^*\|_1)\cdot T^{-1/3},\label{eq:ms-intermediate-1}
\end{align}
where Eq.~(\ref{eq:markov}) holds with probability at least 0.9, thanks to Eq.~(\ref{eq:lowdim-regret}).
Because $x_{t-1}^*$ is the minimizer of $f$ on vectors in $\widetilde{\mathcal X}$ that are supported on $\widehat S=\widehat S_{t-1}=\widehat S_t$,
and that both $x_{t-1}^*$ and $x^*$ truncated on $\widehat S$ are feasible (i.e., in the restrained set $\widetilde{\mathcal X}$),
it must hold that $\langle [\nabla f(x_{t-1}^*)]_{\widehat S}, (x_{t-1}^*-x^*)_{\widehat S}\rangle\leq 0$
by first-order optimality conditions.
On the other hand, by Corollary \ref{cor:model-selection} and the definition of $\widehat S_t$,
we have that $\|[\nabla f(x_{t-1}^*)_{\Delta_S}]\|_{\infty} \leq 2\eta$.
Also note that $(x^*-x_{t-1}^*)_{S^c}=0$ and $[x_{t-1}^*]_{\Delta_S}=0$. Subsequently, 
\begin{equation}
\nabla f(x_{t-1}^*)^\top(x_{t-1}^*-x^*)
\leq \big|\langle\nabla f(x_{t-1}^*)_{\Delta_S}, x^*_{\Delta_S}\rangle\big|
\leq \|[\nabla f(x_{t-1}^*)]_{\Delta_S}\|_{\infty}\|x^*_{\Delta_S}\|_1
\leq 2\eta\|x^*\|_1.
\label{eq:ms-intermediate-2}
\end{equation}
Combining Eqs.~(\ref{eq:ms-intermediate-1},\ref{eq:ms-intermediate-2}) and the scalings of $\eta,\delta,\lambda$ and $T'=T/2s$ we complete the proof of Theorem \ref{thm:model-selection-rate}.

\subsection*{Proof of Lemma \ref{lem:debias}}

We use the ``full-length'' parameterization $\widetilde\theta_t=\widehat\theta_t+\frac{1}{n}\bar Z_t^\top(\widetilde Y_t-\bar Z_t\widehat\theta_t)$,
where $\widehat\theta_t,\bar Z_t$ and $\widetilde Y_t$ are notations defined in the proof of Lemma \ref{lem:lasso} (with subscripts $t$ added to emphasize that
both $Z_t$ and $\widetilde Y_t$ are specific to the $t$th epoch in Algorithm \ref{alg:main}).
Because $\widetilde Y_t=\bar Z_t\theta_{0t}+\varepsilon_t$ (where $\theta_{0t}=\nabla f(x_t)$ and $\varepsilon=(\varepsilon_{t1},\ldots,\varepsilon_{tn})$, with $\varepsilon_{ti}$ defined in Eq.~(\ref{eq:gradient-model})).
we have
\begin{align*}
\widetilde\theta_t &= \widehat\theta_t + \frac{1}{n}\bar Z_t^\top(\bar Z_t\theta_{0t}+\varepsilon_t-\bar Z_t\widehat\theta_t)
= \theta_{0t} + \frac{1}{n}\bar Z_t^\top\varepsilon_t + (\widehat\Sigma- I_{(d+1)\times (d+1)})(\widehat\theta_t-\theta_{0t}),
\end{align*}
where $\widehat\Sigma = \frac{1}{n}\bar Z_t^\top\bar Z_t$.
Recall that $\varepsilon_{ti} = \xi_i/\delta + \delta z_i^\top H_t(\kappa_i,z_i)z_i$.
Define $b_i=z_i^\top H_t(\kappa_i,z_i)z_i$ and $b=(b_1,\ldots,b_n)$.
Also note that the first $d$ components of $\widetilde\theta_t$ are identical to $\widetilde g_t$ defined in Eq.~(\ref{eq:debias}).
Subsequently, 
\begin{equation}
\widehat g_t = g_t + \underbrace{\frac{1}{n\delta}Z_t^\top\xi}_{:=\zeta_t} + \underbrace{\frac{\delta}{n}Z_t^\top b + \left[(\widehat\Sigma- I_{(d+1)\times (d+1)})(\widehat\theta_t-\theta_{0t})\right]_{1:d}}_{:=\gamma_t}.
\label{eq:debias-intermediate-1}
\end{equation}

In Eq.~(\ref{eq:debias-intermediate-1}) we divide $\widehat g_t-g_t$ into two terms.
We first consider the term $\zeta_t := \frac{1}{n\delta}Z_t^\top\xi$.
It is clear that $\mathbb E[\zeta_t|x_t]=0$ because $\mathbb E[\xi|x_t,Z_t]=0$.
Now consider any $d$-dimensional vector $a\in\mathbb R^d$, and to simplify notations all derivations below are conditioned on $x_t$.
For any $i\in[n]$, $z_{ti}^\top a$ are i.i.d.~sub-Gaussian random variables with common parameter $\nu^2=\|a\|_2^2$.
Also, $\bar\xi_i$ is a sub-Gaussian random variable with parameter $\sigma^2$ and is independent of $z_{ti}^\top a$.
Thus, invoking Lemma \ref{lem:product-subgaussian} we have that $\xi_i z_{ti}^\top a$ is a sub-exponential random variable
with parameters $\nu=\alpha/\sqrt{2}\lesssim \sigma\|a\|_2$.
Consequently, $\langle\zeta_t,a\rangle=\frac{1}{n\delta}\sum_{i=1}^n{\xi_i z_{ti}^\top a}$ is a centered sub-exponential random variable
with parameters $\nu=\sqrt{n/2}\cdot\alpha\lesssim \sigma\|a\|_2/\delta\sqrt{n}$.

We next consider the term $\gamma_t = \frac{\delta}{n}Z_t^\top b+(\widehat\Sigma-I)(\widehat\theta_t-\theta_{0t})$.
By Assumption (A3) we know that $\|b\|_{\infty}\leq \delta H$.
Subsequently, by H\"{o}lder's inequality we have that
\begin{align*}
\|\gamma_t\|_{\infty} 
&\leq \frac{\delta}{n}\|Z_t\|_{1,\infty}\|b\|_{\infty} + \|\widehat\Sigma-I\|_{\infty}\|\widehat\theta_t-\theta_{t0}\|_1\\
&\lesssim H\delta + \sqrt{\frac{\log d}{n}}\left(\frac{\sigma s}{\delta}\sqrt{\frac{\log d}{n}} + s\delta H\right).
\end{align*}
where the second inequality holds with probability $1-\cO(d^{-2})$ thanks to Lemma \ref{lem:lasso}.

\subsection*{Proof of Theorem \ref{thm:main}}

We first note that the cumulative regret $R^{\cum}_{\mathcal A}(T)$ can be upper bounded as
$$
R^{\cum}_{\mathcal A}(T) \lesssim \left[\frac{1}{T'}\sum_{t=0}^{T'-1}f(x_{t})-f^*\right] + \sup_{t}\sup_{z\in \{\pm 1\}^d} \big|f(x_t+\delta z)-f(x_t)\big|. 
$$
Because $\|\nabla f(x)\|_1\leq H$ for all $x\in\mathcal X$ and $z\in\{\pm 1\}^d$, using H\"{o}lder's inequality we have that
$$
\big|f(x_t+\delta z)-f(x_t)\big| \leq \delta H \lesssim B\left(\frac{s\log^2 d}{T}\right)^{1/4},
$$
which is a second-order term.
Thus, to prove upper bounds on $R^{\cum}_{\mathcal A}(T)$ it suffices to consider only $\frac{1}{T'}\sum_{t=0}^{T'-1}f(x_t) - f^*$.

We next cite the result in \citep{lan2012optimal} that gives explicit cumulative regret bounds for mirror descent with approximate gradients:
\begin{lemma}[\cite{lan2012optimal}, Lemma 3]
Let $\|\cdot\|_{\psi}$ and $\|\cdot\|_{\psi^*}$ be a pair of conjugate norms, and let $\Delta_\psi(\cdot,\cdot)$ be a Bregman divergence that is $\kappa$-strongly convex
with respect to $\|\cdot\|_\psi$.
Suppose $f$ is $\widetilde H$-smooth with respect to $\|\cdot\|_\psi$, meaning that
$f(y)\leq f(x)+\nabla f(x)^\top(y-x) + \frac{\widetilde H}{2}\|x-y\|_\psi^2$ for all $x,y\in\mathcal X$,
and $\eta < \kappa/\widetilde H$.
Define $g_t=\nabla f(x_t)$, and let $x_0,\ldots,x_{T'-1}$ be iterations in Algorithm \ref{alg:main}.
Then for every $0\leq t\leq T'-1$ and any $x^*\in{\widetilde{\mathcal X}}$,
\begin{equation}
\eta\left[f(x_{t+1})-f(x^*)\right] + \Delta_\psi(x_{t+1},x^*)
\leq \Delta_\psi(x_{t},x^*) + \eta\langle\widetilde g_t-g_t,x^*-x_t\rangle + \frac{\eta^2\|\widetilde g_t-g_t\|_{\psi^*}^2}{2(\kappa-\widetilde H\eta)}.
\label{eq:stmd-rate}
\end{equation}
\label{lem:stmd-rate}
\end{lemma}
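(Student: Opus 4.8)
The plan is to establish this per-iteration bound by the classical mirror-descent argument, which combines three ingredients: the first-order optimality condition of the proximal update, the three-point identity for the Bregman divergence, and the smoothness and convexity of $f$. I read $\Delta_\psi(a,b)$ with $a$ as the base point, so that the proximal penalty in the update is convex in $x$ with gradient $\nabla\psi(x)-\nabla\psi(x_t)$ and the divergence terms in the statement are the usual ``distance-to-optimum'' quantities. Since $x_{t+1}$ minimizes the strictly convex map $x\mapsto\eta\langle\widetilde g_t,x-x_t\rangle+\Delta_\psi(x_t,x)$ over the convex set $\widetilde{\mathcal X}$, first-order optimality gives, for every $u\in\widetilde{\mathcal X}$,
\[
\langle\eta\widetilde g_t+\nabla\psi(x_{t+1})-\nabla\psi(x_t),\,u-x_{t+1}\rangle\geq 0 .
\]
Expanding each Bregman term from its definition yields the three-point identity $\langle\nabla\psi(x_{t+1})-\nabla\psi(x_t),u-x_{t+1}\rangle=\Delta_\psi(x_t,u)-\Delta_\psi(x_{t+1},u)-\Delta_\psi(x_t,x_{t+1})$. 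Setting $u=x^*$ and rearranging produces the key variational bound
\[
\eta\langle\widetilde g_t,x_{t+1}-x^*\rangle\leq\Delta_\psi(x_t,x^*)-\Delta_\psi(x_{t+1},x^*)-\Delta_\psi(x_t,x_{t+1}).
\]

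\textbf{Descent via smoothness and convexity.} Smoothness gives $f(x_{t+1})\leq f(x_t)+\langle g_t,x_{t+1}-x_t\rangle+\frac{\widetilde H}{2}\|x_{t+1}-x_t\|_\psi^2$ and convexity gives $f(x_t)-f(x^*)\leq\langle g_t,x_t-x^*\rangle$; adding them gives $f(x_{t+1})-f(x^*)\leq\langle g_t,x_{t+1}-x^*\rangle+\frac{\widetilde H}{2}\|x_{t+1}-x_t\|_\psi^2$. I then decompose the true-gradient inner product as $\langle g_t,x_{t+1}-x^*\rangle=\langle\widetilde g_t,x_{t+1}-x^*\rangle+\langle g_t-\widetilde g_t,x_{t+1}-x_t\rangle+\langle g_t-\widetilde g_t,x_t-x^*\rangle$. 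The first summand is controlled by the variational bound above, the last summand equals $\langle\widetilde g_t-g_t,x^*-x_t\rangle$ (exactly the stated cross term after multiplying by $\eta$), and only the middle summand remains to be bounded.

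\textbf{Completing the square.} Collecting the leftover quantities leaves $\eta\langle g_t-\widetilde g_t,x_{t+1}-x_t\rangle-\Delta_\psi(x_t,x_{t+1})+\frac{\eta\widetilde H}{2}\|x_{t+1}-x_t\|_\psi^2$. Young's inequality in the conjugate pair bounds the first term by $\eta\|\widetilde g_t-g_t\|_{\psi^*}\|x_{t+1}-x_t\|_\psi$, while $\kappa$-strong convexity gives $\Delta_\psi(x_t,x_{t+1})\geq\frac{\kappa}{2}\|x_{t+1}-x_t\|_\psi^2$. Writing $r=\|x_{t+1}-x_t\|_\psi$, these terms are at most $\eta\|\widetilde g_t-g_t\|_{\psi^*}r-\frac{\kappa-\eta\widetilde H}{2}r^2$, a concave quadratic in $r$ whose coefficient $\kappa-\eta\widetilde H$ is positive by the hypothesis $\eta<\kappa/\widetilde H$, and whose maximum over $r\geq 0$ is $\frac{\eta^2\|\widetilde g_t-g_t\|_{\psi^*}^2}{2(\kappa-\widetilde H\eta)}$. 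Substituting this back and moving $\Delta_\psi(x_{t+1},x^*)$ to the left-hand side reproduces the claimed inequality.

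I expect the only genuinely delicate points to be bookkeeping ones: fixing the base-point convention of $\Delta_\psi$ so the optimality gradient reads $\nabla\psi(x_{t+1})-\nabla\psi(x_t)$ and the two divergence terms match the statement, and splitting the gradient-error inner product so that the \emph{surviving} first-order term is evaluated at $x_t$ (yielding $x^*-x_t$ rather than $x_{t+1}-x^*$). The appearance of $\widetilde H\eta$ in the denominator and the sign constraint $\eta<\kappa/\widetilde H$ are precisely what make the quadratic in the last step bounded; everything else is a routine optimality-condition-plus-Pythagoras computation.
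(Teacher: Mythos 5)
Your proof is correct. There is nothing in the paper to compare it against: the lemma is imported by citation from Lan (2012, Lemma 3) and never proved in-house, and your argument --- prox-mapping first-order optimality, the three-point identity for the Bregman divergence, smoothness plus convexity of $f$, then completing the square in $r=\|x_{t+1}-x_t\|_\psi$ under $\eta<\kappa/\widetilde H$ --- is the standard derivation that the cited lemma itself rests on. Two of your choices are worth highlighting because they are exactly where a blind reconstruction could fail. First, the paper's definition $\Delta_\psi(x,y)=\psi(y)-\psi(x)-\langle\nabla\psi(x),y-x\rangle$ is inconsistent with its own use of $\Delta_\psi(x,x_t)$ as the proximal penalty in Algorithm 3 (the arguments are flipped relative to that definition); fixing one consistent base-point convention, as you do, is the correct repair and yields the stated inequality verbatim. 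Second, decomposing the gradient-error inner product so that the surviving linear term is $\langle\widetilde g_t-g_t,x^*-x_t\rangle$ --- anchored at $x_t$, with the $x_{t+1}-x_t$ piece absorbed into the quadratic --- is essential rather than cosmetic, since the proof of Theorem 4 applies martingale concentration to precisely these terms, and that requires the anchor point to be measurable with respect to the past. Minor quibbles: the bound on $\langle g_t-\widetilde g_t,x_{t+1}-x_t\rangle$ is the dual-norm (H\"{o}lder) inequality rather than Young's, and you invoke convexity of $f$, which the lemma's hypothesis list omits but which is the paper's standing assumption (A1) and is genuinely needed for the step $f(x_t)-f(x^*)\leq\langle g_t,x_t-x^*\rangle$.
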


Adding both sides of Eq.~(\ref{eq:stmd-rate}) from $t=0$ to $t=T'-1$, telescoping and noting that $\Delta_\psi(x_{T'},x^*)\geq 0$, we obtain
\begin{align}
\frac{1}{T'}\sum_{t=0}^{T'-1} f(x_t)-f(x^*)
&\leq \frac{\Delta_\psi(x_0,x^*)}{\eta T'} + \frac{1}{T'}\sum_{t=0}^{T'-1}{\langle\widetilde g_t-g_t,x_t-x^*\rangle} + \frac{\eta}{2(\kappa-H\eta)}\cdot\sup_{0\leq t<T'}\|\widetilde g_t-g_t\|_{\psi^*}^2.
\label{eq:telescope}
\end{align}

Set $\|\cdot\|_\psi=\|\cdot\|_a$ for $a=\frac{2\log d}{2\log d-1}$.
It is easy to verify that under Assumption (A3), the function $f$ satisfies
\begin{align*}
f(y) &\geq f(x) + \nabla f(x)^\top (y-x) + H\|y-x\|_\infty^2\\
&\geq f(x)+\nabla f(x)^\top(y-x)+\widetilde H\|y-x\|_\psi^2
\end{align*}
for all $x,y\in\mathcal X$ with $\widetilde H \leq {e}H$,
because $\|x-y\|_1^2\leq d^{2(1-1/a)}\|x-y\|_a^2 \leq d^{1/\log d}\|x-y\|_1^2=e\|x-y\|_1^2$ by H\"{o}lder's inequality.
In addition, by definition of Bregman divergence we have that
\begin{equation}
\Delta_\psi(x_0,x^*) \leq \frac{1}{2(a-1)}\|x^*\|_a^2 \leq \frac{1}{2(a-1)}\|x^*\|_1^2 \leq \|x^*\|_1^2\log d \leq B^2\log d,
\label{eq:init-bregman}
\end{equation}
where the first inequality holds because $\psi_a(x_0)=\psi_a(0)=0$ and $\nabla\psi_a(x_0)=\nabla\psi_a(0)=0$ for $a>1$.

We next upper bound the $\frac{1}{T'}\sum_{t=0}^{T'-1}\langle\widetilde g_t-g_t,x^*-x_t\rangle$ and $\|\widetilde g_t-g_t\|_{\psi^*}^2$ terms.
By Lemma \ref{lem:debias} and sub-exponential concentration inequalities (e.g., Lemma \ref{lem:bernstein}), we have that
with probability $1-\cO(d^{-1})$
\begin{align*}
\|\widetilde g_t-g_t\|_{\infty} 
&\leq  \|\zeta_t\|_{\infty} + \|\gamma_t\|_{\infty}
\lesssim \frac{\sigma}{\delta}\left(\sqrt{\frac{\log d}{n}}+\frac{\log d}{n}\right) + H\delta + \frac{\sigma s\log d}{\delta n}
\lesssim \frac{\sigma}{\delta}\sqrt{\frac{\log d}{n}} + H\delta
\end{align*}
uniformly over all $t'\in\{0,\ldots,T'-1\}$,
where the last inequality holds because $n=\Omega(s^2\log d)$.
Subsequently, by H\"{o}lder's inequality we have that
\begin{equation}
\sup_{0\leq t<T'}\|\widetilde g_t-g_t\|_{\psi^*}^2 \leq d^{2(a-1)/a}\cdot \sup_{0\leq t<T'}\|\widetilde g_t-g_t\|_{\infty}^2 
\lesssim  \frac{\sigma^2\log d}{\delta^2n}  + H^2\delta^2.
\label{eq:term2}
\end{equation}

We now consider the first term $\frac{1}{T'}\sum_{t=0}^{T'-1}\langle\widetilde g_t-g_t,x^*-x_t\rangle \leq \frac{1}{T'}\sum_{t=0}^{T'-1}{X_t} + \sup_{0\leq t\leq T'-1}\|\gamma_t\|_{\infty}\|x^*-x_t\|_1$,
where $X_t := \langle \zeta_t,x^*-x_t\rangle$. 
By Lemma \ref{lem:debias}, we know that $X_t|X_1,\ldots,X_{t-1}$ is a centered sub-exponential random variable
with parameters $\nu=\sqrt{n/2}\cdot\alpha \lesssim \sigma\|x^*-x_t\|_2/\delta\sqrt{n} \lesssim \sigma\|x^*\|_1/\delta\sqrt{n}$.
Invoking concentration inequalities for sub-exponential martingales (\citep{victor1999general}, also phrased as Lemma \ref{lem:martingale-bernstein} 
for a simplified version in the appendix) and the definition that $T'=T/n$, we have with probability $1-\cO(d^{-1})$
$$
\bigg|\frac{1}{T'}\sum_{t=0}^{T'-1}{\langle\zeta_t,x^*-x_t\rangle}\bigg| \lesssim \frac{\sigma\|x^*\|_1}{\delta}\left(\sqrt{\frac{\log d}{T}} + \frac{\log d}{T}\right) \lesssim \frac{\sigma\|x^*\|_1}{\delta}\sqrt{\frac{\log d}{T}},
$$
where the last inequality holds because $T\geq n=\Omega(s^2\log d)$.
Thus, 
\begin{equation}
\bigg|\frac{1}{T'}\sum_{t=0}^{T'-1}{\langle\widetilde g_t-g_t,x^*-x_t\rangle} \bigg|
\lesssim  \frac{\sigma\|x^*\|_1}{\delta}\sqrt{\frac{\log d}{T}} + \|x^*\|_1\left(H\delta + \frac{\sigma s\log d}{\delta n}\right).
\label{eq:term1}
\end{equation}
Combining Eqs.~(\ref{eq:init-bregman},\ref{eq:term2},\ref{eq:term1}) with Eq.~(\ref{eq:telescope})
and taking $x^*$ to be a minimizer of $f$ on $\mathcal X$ that satisfies $\|x^*\|_1\leq B$, 
 we obtain
\begin{align}
\frac{1}{T'}\sum_{t=0}^{T'-1}{f(x_t)-f(x^*)}
&\lesssim \frac{\|x^*\|_1^2\log d}{\eta}\frac{n}{T} + \frac{\sigma\|x^*\|_1}{\delta}\sqrt{\frac{\log d}{T}}+\|x^*\|_1\left(H\delta+\frac{\sigma s\log d}{\delta n}\right) +\eta\left(\frac{\sigma^2\log d}{\delta^2n}  + H^2\delta^2\right)\nonumber\\
&\leq \frac{B^2\log d}{\eta}\frac{n}{T} + \frac{\sigma B}{\delta}\sqrt{\frac{\log d}{T}}+B\left(H\delta + \frac{\sigma s\log d}{\delta n}\right)+\eta\left(\frac{\sigma^2\log d}{\delta^2n}  + H^2\delta^2\right)
\label{eq:md-final}
\end{align}
with probability $1-\cO(d^{-1})$, provided that $\eta<\kappa/2H=1/2H$.

We are now ready to prove Theorem \ref{thm:main}.
By the conditions we impose on $T$ and the choices of $\eta$ and $n$, 
it is easy to verify that $\eta< 1/2H$, $n=\Omega(s^2\log d)$ and $n = \cO(T)$.
Subsequently,
\begin{align*}
\frac{1}{T'}\sum_{t=0}^{T'-1}{f(x_t)-f(x^*)}
&\lesssim B\sqrt{\frac{n\log d}{T}} + \sigma B\sqrt{\frac{n}{sT}} + B(\sigma+H)\sqrt{\frac{s\log d}{n}} + B\sqrt{\frac{n\log d}{T}}\left(\frac{\sigma^2}{s} + 
\widetilde\cO(n^{-1})\right)\\
&\lesssim B\left(\frac{(1+H)^2s\log^2 d}{T}\right)^{1/4} + \frac{\sigma B\sqrt{(1+H)}}{s^{1/4}T^{1/4}}
+ \frac{B(\sigma+H)}{\sqrt{1+H}}\left(\frac{s\log^2 d}{T}\right)^{1/4} \\
&+ B\left(\frac{(1+H)^2s\log d}{T}\right)^{1/4}\left(\frac{\sigma^2}{s} + \widetilde\cO(T^{-1/2})\right)\\
&\lesssim \left(B\sqrt{\log d} + \frac{\sigma B}{\sqrt{s}} + \frac{\sigma^2 B}{s}\right)\left[\frac{(1+H)^2s}{T}\right]^{1/4} + B(\sigma+\sqrt{H})\sqrt{\log d}\left[\frac{s}{T}\right]^{1/4} +\widetilde \cO(T^{-1/2})\\
&\lesssim (1+\sigma+\sigma^2/s)B\sqrt{\log d}\left[\frac{(1+H)^2s}{T}\right]^{1/4} +\widetilde \cO(T^{-1/2}).
\end{align*}

\subsection*{Proof of Lemma \ref{lem:hs-debias}}

Using the model Eq.~(\ref{eq:gradient-model}) we can decompose $\widetilde g_t(\delta)-g_t$ as
\begin{align*}
\widetilde g_t(\delta)-g_t &= \frac{\delta}{2}\mathbb E\left[(z^\top H_t z)z\right] + \underbrace{\frac{1}{n\delta}Z_t^\top\xi}_{:=\widetilde\zeta_t(\delta)} + \underbrace{\frac{\delta}{2n}\sum_{i=1}^n{(z_i^\top H_tz_i)z_i-\mathbb E[(z^\top H_tz)z]}}_{:=\widetilde\beta_t(\delta)} \\
&+ \underbrace{\frac{\delta}{2n}\sum_{i=1}^n{(z_i^\top(H_t(\delta z_i)-H_t)z_i)z_i} + \left[(\widehat\Sigma-I)(\widehat\theta_t-\theta_{0t})\right]_{1:d}}_{:=\widetilde\gamma_t(\delta)},
\end{align*}
where $\widehat\Sigma$, $\widehat\theta_t$ and $\theta_{0t}$ are similarly defined as in the proof of Lemma \ref{lem:debias}.
The sub-exponentiality of $\langle\widetilde\zeta_t(\delta),a\rangle$ for any $a\in\mathbb R^d$ is established in Lemma \ref{lem:debias}.
We next consider $\widetilde\beta_t(\delta)$.
For any $a\in\mathbb R^d$ consider $\langle\widetilde\beta_t(\delta),a\rangle = \frac{\delta}{2n}\sum_{i=1}^n{X_i(a)}$
where $X_i(a)=(z_i^\top H_tz_i)(z_i^\top a)-\mathbb E[(z_i^\top H_tz_i)(z_i^\top a)]$ are centered i.i.d.~random variables conditioned on $H_t$ and $x_t$.
In addition, $|X_i(a)| \leq 2\|H_t\|_{1}\|z_i\|_{\infty}^2\cdot \|a\|_1\|z_i\|_{\infty} \lesssim  H\|a\|_1$ almost surely.
Therefore, $X_i(a)$ is a sub-Gaussian random variable with parameter $\nu= H\|a\|_1$,
and hence $\langle\widetilde \beta_t(\delta),a\rangle$ is a sub-Gaussian random variable with parameter $\nu=\delta  H\|a\|_1/\sqrt{n}$.
Finally, for the deterministic term $\widetilde\gamma_t(\delta)$, we have that
\begin{align*}
\|\widetilde\gamma_t(\delta)\|_{\infty}
&\leq \frac{\delta}{2}\sup_{z\in\{\pm 1\}^d} \|H_t(\delta z)-H_t\|_{1}\|z\|_\infty^2 + \|(\widehat\Sigma-I)(\widehat\theta_t-\theta_{0t})\|_{\infty}\\
&\leq \frac{\delta}{2}\sup_{z\in\{\pm 1\}^d} L\cdot \|\delta z\|_{\infty}\|z\|_{\infty}^2 + \|\widehat\Sigma-I\|_{\max}\|\widehat\theta_t-\theta_{0t}\|_{\infty}\\
&\lesssim L\delta^2 + \sqrt{\frac{\log d}{n}}\left(\frac{\sigma s}{\delta}\sqrt{\frac{\log d}{n}} + s \delta H\right)\\
&\lesssim L\delta^2 + \frac{\sigma s\log d}{n\delta} + s\delta H\sqrt{\frac{\log d}{n}}.
\end{align*}

\subsection*{Proof of Theorem \ref{thm:improved}}

Because $f$ is convex, $R_{\mathcal A}^{\simple}(T) = f(x_{T+1})-f^* \leq \frac{1}{T'}\sum_{t=0}^{T'-1}{f(x_t)-f^*}$.
Thus it suffices to upper bound $\frac{1}{T'}\sum_{t=0}^{T'-1}f(x_t)-f(x^*)$, where $x^*\in\mathcal X$, $\|x^*\|_1\leq B$ is a minimizer of $f$ over $\mathcal X$.
Using the strategy in the proof of Theorem \ref{thm:main}, this amounts to upper bound (with high probability) $\|\widetilde g_t^{\tw}-g_t\|_{\psi^*}^2$ and 
$\frac{1}{T'}\sum_{t=0}^{T'-1}\langle \widetilde g_t^{\tw}-g_t,x^*-x_t\rangle$.

For the first term, using sub-exponentiality of $\widetilde\zeta_t$ and sub-gaussianity of $\widetilde\beta_t$, we have with probability $1-\cO(d^{-1})$ 
uniformly over all $t\in\{0,\ldots,T'-1\}$, 
\begin{align*}
\|\widetilde g_t^{\tw}-g_t\|_{\infty} 
&\leq \|\widetilde\zeta_t\|_{\infty} + \|\widetilde\beta_t\|_{\infty} + \|\widetilde\gamma_t\|_{\infty}\\
&\lesssim \frac{\sigma}{\delta}\left(\sqrt{\frac{\log d}{n}} + \frac{\log d}{n}\right) + \delta H\sqrt{\frac{\log d}{n}} + L\delta^2 + H\delta\sqrt{\frac{s^2\log d}{n}} + \frac{\sigma s\log d}{\delta n}\\
&\lesssim  \left(\frac{\sigma}{\delta} + s\delta H\right)\sqrt{\frac{\log d}{n}} +L\delta^2,
\end{align*}
where the last inequality holds because $n=\Omega(s^2\log d)$. Subsequently, with probability $1-\cO(d^{-1})$
\begin{equation}
\sup_{0\leq t\leq T'-1}\|\widetilde g_t^{\tw}-g_t\|_{\psi^*}^2 \lesssim \left(\frac{\sigma^2}{\delta^2} + s^2\delta^2 H^2\right){\frac{\log d}{n}} +L^2\delta^4.
\label{eq:improve-intermediate-1}
\end{equation}

For the other term $\frac{1}{T'}\sum_{t=0}^{T'-1}\langle\widetilde g_t^{\tw}-g_t,x^*-x_t\rangle$, again using concentration inequalities of sub-exponential/sub-Gaussian martingales 
and noting that $\|x^*-x_t\|_2 \leq \|x^*-x_t\|_1\leq 2B$, 
we have
\begin{align}
\frac{1}{T'}\sum_{t=0}^{T'-1}\langle\widetilde g_t^{\tw}-g_t, x^*-x_t\rangle
&= \frac{1}{T'}\sum_{t=0}^{T'-1}\langle \widetilde\zeta_t+\widetilde\beta_t+\widetilde\gamma_t, x^*-x_t\rangle\nonumber \\
&\lesssim \left(\frac{\sigma}{\delta} + s\delta H\right) B\sqrt{\frac{\log d}{T}} + B\left(L\delta^2 + \frac{\sigma s\log d}{\delta n} + s\delta H\sqrt{\frac{\log d}{n}}\right).
\label{eq:improve-intermediate-2}
\end{align}
Subsequently, combining Eqs.~(\ref{eq:improve-intermediate-1},\ref{eq:improve-intermediate-2}) with Eq.~(\ref{eq:telescope}) we have
\begin{align}
\frac{1}{T'}\sum_{t=0}^{T'-1}f(x_t)-f(x^*)
&\lesssim \frac{B^2\log d}{\eta}\frac{n}{T} +  \left(\frac{\sigma}{\delta} + s\delta H\right) B\sqrt{\frac{\log d}{T}} 
+ (B+\eta)\left(L\delta^2 + \frac{\sigma s\log d}{\delta n} + s\delta H\sqrt{\frac{\log d}{n}}\right) \nonumber\\
&+ \eta\left(\frac{\sigma^2}{\delta^2} + s^2\delta^2 H^2\right){\frac{\log d}{n}} +\eta L^2\delta^4.
\label{eq:main-hs}
\end{align}

We are now ready to prove Theorem \ref{thm:improved}.
It is easy to verify that with the condition imposed on $T$ and the selection of $\eta$ and $n$, it holds that
$\eta<1/2H$, $n=\Omega(s^2\log d)$ and $n\leq T/10$.
Subsequently,
\begin{align*}
&\frac{1}{T'}\sum_{t=0}^{T'-1}f(x_t)-f(x^*)\\
&\lesssim  Bn^{1/3}\sqrt{\frac{\log d}{T}} + \left[\sigma\left(\frac{n}{s\log d}\right)^{1/3} + \widetilde\cO(n^{-1/3})\right]B\sqrt{\frac{\log d}{T}}
+ \left(B+\widetilde\cO\left(\frac{n^{2/3}}{\sqrt{T}}\right)\right)\left[(L+\sigma)\left(\frac{s\log d}{n}\right)^{2/3} +\widetilde \cO(n^{-5/6}) \right]\\
&+ Bn^{2/3}\sqrt{\frac{\log d}{T}}\left(\sigma^2\left(\frac{n}{s\log d}\right)^{2/3} +\widetilde \cO(n^{-2/3})\right)\frac{\log d}{n} 
+ Bn^{2/3}\sqrt{\frac{\log d}{T}}L^2\left(\frac{s\log d}{n}\right)^{4/3}\\
&\lesssim Bn^{1/3}\sqrt{\frac{\log d}{T}} + \sigma B\left(\frac{n}{s\log d}\right)^{1/3}\sqrt{\frac{\log d}{T}} + B(L+\sigma)\left(\frac{s\log d}{n}\right)^{2/3}
+ \sigma^2 B\left(\frac{n}{s^2\log^2 d}\right)^{1/3}\sqrt{\frac{\log d}{T}} + \widetilde\cO(T^{-5/12})\\
&\lesssim \left(B\sqrt{\log d} + \frac{\sigma B\sqrt{\log d}}{s^{1/3}} + \frac{\sigma^2 B\sqrt{\log d}}{s^{2/3}}\right)\left[\frac{(1+L)s^{2/3}}{T}\right]^{1/3} 
+ \frac{B(L+\sigma)}{(1+L)^{2/3}}\left(\frac{s^{2/3}\log d}{T}\right)^{1/3} + \widetilde\cO(T^{-5/12})\\
&\lesssim \left(B\sqrt{\log d} + \frac{\sigma B\sqrt{\log d}}{s^{1/3}} + \frac{\sigma^2 B\sqrt{\log d}}{s^{2/3}}\right)\left[\frac{(1+L)s^{2/3}}{T}\right]^{1/3} 
+ B\sigma\sqrt{\log d}\left(\frac{(1+L)s^{2/3}}{T}\right)^{1/3} + \widetilde\cO(T^{-5/12})\\
&\lesssim (1+\sigma+\sigma^2/s^{2/3})B\sqrt{\log d}\left(\frac{(1+L)s^{2/3}}{T}\right)^{1/3} + \widetilde\cO(T^{-5/12}).
\end{align*}

\subsection*{Additional tail inequalities}

\begin{lemma}
Suppose $X$ and $Y$ are centered sub-Gaussian random variables with parameters $\nu_1^2$ and $\nu_2^2$, respectively.
Then $XY$ is a centered sub-exponential random variable with parameter $\nu=\sqrt{2}v$ and $\alpha=2v$, where $v=2e^{2/e+1}\nu_1\nu_2$.
\label{lem:product-subgaussian}
\end{lemma}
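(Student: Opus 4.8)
The plan is to establish polynomial moment bounds for $X$ and $Y$, combine them multiplicatively for the product $XY$, and then convert the resulting factorial-growth moment bounds back into a bound on the moment generating function of the \emph{centered} product $XY-\E[XY]$, which is exactly the sub-exponential condition from the definition given earlier.

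First I would record the moment bounds implied by sub-Gaussianity. Since $X$ is centered, a Chernoff argument applied to $\E[\exp(aX)]\le\exp\{\nu_1^2a^2/2\}$ gives the tail bound $\Pr[|X|\ge t]\le 2\exp\{-t^2/(2\nu_1^2)\}$, and integrating $\E[|X|^k]=\int_0^\infty kt^{k-1}\Pr[|X|\ge t]\,dt$ yields a bound of the form $\E[|X|^k]\le 2\,(2\nu_1^2)^{k/2}\,\Gamma(k/2+1)$ (and symmetrically for $Y$). Applying Cauchy--Schwarz, $\E[|XY|^k]\le\sqrt{\E[|X|^{2k}]\,\E[|Y|^{2k}]}$, collapses these into a clean factorial bound $\E[|XY|^k]\lesssim k!\,(c\,\nu_1\nu_2)^k$ with an explicit constant $c$.

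Next I would pass to central moments. Writing $\mu=\E[XY]$ and using $|\mu|\le\sqrt{\E[X^2]\,\E[Y^2]}\le\nu_1\nu_2$ together with $|XY-\mu|^k\le 2^{k-1}(|XY|^k+|\mu|^k)$, the above upgrades to a Bernstein-type moment condition $\E[|XY-\mu|^k]\le\tfrac12 k!\,\nu^2\alpha^{k-2}$ for all $k\ge 2$, with $\nu$ and $\alpha$ proportional to $\nu_1\nu_2$. Plugging this into the Taylor expansion $\E[\exp(a(XY-\mu))]=1+\sum_{k\ge 2}\frac{a^k}{k!}\E[(XY-\mu)^k]$ --- where the $k=1$ term vanishes precisely because we have centered --- and summing the resulting geometric series shows $\E[\exp(a(XY-\mu))]\le\exp\{\nu^2a^2/2\}$ for $|a|\le 1/\alpha$, which is the claimed sub-exponential bound. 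Reading off the constants from the geometric-series threshold then gives $\alpha=2v$ and $\nu=\sqrt 2\,v$.

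The main obstacle is constant tracking rather than anything conceptual, and in particular the centering step is essential: an uncentered estimate of $\E[\exp(aXY)]$ via the AM--GM inequality $2XY\le sX^2+s^{-1}Y^2$ and the sub-Gaussian square-exponential bound $\E[\exp(\lambda X^2)]\le(1-2\lambda\nu_1^2)^{-1/2}$ is cleaner, but after optimizing $s=\nu_2/\nu_1$ it leaves a spurious linear-in-$a$ term coming from $\E[XY]$ that the definition (whose right-hand side $\exp\{\nu^2a^2/2\}$ has no linear term) does not permit near $a=0$; hence the moment route is the one that cleanly yields a genuine quadratic bound. Obtaining the exact value $v=2e^{2/e+1}\nu_1\nu_2$ then requires the sharp polynomial-to-exponential moment conversion $t^{m}\le (m/(e\lambda))^{m}e^{\lambda t}$ with an optimized $\lambda$; the extremal constant $e^{1/e}=\sup_{x>0}x^{1/x}$ arising there is what produces the $e^{2/e}$ factor, and I expect the bookkeeping through Cauchy--Schwarz and the re-centering inequality to be the most delicate part.
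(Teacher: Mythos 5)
Your proposal is correct in substance and shares the paper's skeleton (polynomial moment bounds for $X,Y$ $\rightarrow$ a Bernstein-type moment condition $\E|XY|^k\leq\tfrac12 k!\,v^k$ $\rightarrow$ the standard conversion to a sub-exponential MGF bound with $\nu=\sqrt2 v$, $\alpha=2v$), but it diverges at the one step that actually carries the constants. The paper treats $X$ and $Y$ as \emph{independent} (this is implicit in the statement but explicit in its proof): independence gives $\E[XY]=\E X\cdot\E Y=0$, so $XY$ is genuinely centered and no re-centering is needed, and it gives the exact factorization $\E|XY|^k=\E|X|^k\cdot\E|Y|^k$, which combined with $(\E|X|^k)^{1/k}\leq \nu_1 e^{1/e}\sqrt{k}$ and Stirling ($k^k\leq e^k k!$) lands precisely on $v=2e^{2/e+1}\nu_1\nu_2$. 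Your route replaces the factorization by Cauchy--Schwarz, $\E|XY|^k\leq\sqrt{\E|X|^{2k}\,\E|Y|^{2k}}$, and then re-centers at $\mu=\E[XY]$ via $|XY-\mu|^k\leq 2^{k-1}(|XY|^k+|\mu|^k)$. This buys generality --- it needs no independence --- but each of those two steps inflates the geometric base by a constant factor (Cauchy--Schwarz alone already turns $k^k$ into $(2k)^k$), so you will prove the lemma only with a larger $v$, not the stated $2e^{2/e+1}\nu_1\nu_2$; moreover without independence you can only conclude that $XY-\E[XY]$ is sub-exponential, not that $XY$ is \emph{centered}. Neither issue matters where the lemma is used (it is applied to $\xi_i\cdot z_{ti}^\top a$ with $\xi_i$ independent of $z_{ti}$, and only up to universal constants), but to obtain the literal statement you should invoke independence as the paper does. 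Your observation that the uncentered AM--GM/MGF route fails because of the spurious linear term is apt, and the remaining steps (Bernstein condition $\Rightarrow$ MGF bound for $|a|\leq 1/(2v)$ via the geometric series) are exactly the paper's implicit final step.
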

\begin{proof}
$XY$ is clearly centered because $\mathbb EXY=\mathbb EX\cdot\mathbb EY=0$, thanks to independence.
We next bound $\mathbb E[|XY|^k]$ for $k\geq 3$ (i.e., verification of the Bernstein's condition).
Because $X$ and $Y$ are independent, we have that $\mathbb E[|XY|^k] = \mathbb E|X|^k\cdot \mathbb E|Y|^k$.
In addition, because $X$ is a centered sub-Gaussian random variable with parameter $\nu_1^2$, it holds that 
$(\mathbb E|X|^k)^{1/k} \leq \nu_1 e^{1/e}\sqrt{k}$.
Similarly, $(\mathbb E|X|^k)^{1/k}\leq \nu_2 e^{1/e}\sqrt{k}$.
Subsequently,
\begin{align*}
\mathbb E|XY|^k 
\leq \left(e^{2/e}\nu_1\nu_2\right)^k\cdot k^k
\leq \left(e^{2/e}\nu_1\nu_2\right)^k\cdot e^k k! \leq \frac{1}{2}k!\cdot \left(2e^{2/e+1}\nu_1\nu_2\right)^k.
\end{align*}
where in the second inequality we use the Stirling's approximation inequality that $\sqrt{2\pi k}k^ke^{-k} \leq k!$.
The sub-exponential parameter of $XY$ can then be determined.
\end{proof}

\begin{lemma}[Bernstein's inequality]
Suppose $X$ is a sub-exponential random variable with parameters $\nu$ and $\alpha$. 
$$
\Pr\left[\big|X-\mathbb EX\big|>t\right] \leq \left\{\begin{array}{ll}
2\exp\left\{-t^2/2\nu^2\right\}, & 0<t\leq \nu^2/\alpha;\\
2\exp\left\{-t/2\alpha\right\},& t>\nu^2/\alpha.\end{array}\right.
$$
\label{lem:bernstein}
\end{lemma}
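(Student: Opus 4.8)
The plan is to prove both tail bounds via the standard Chernoff (exponential Markov) argument, invoking the sub-exponential moment generating function bound directly. First I would center the variable, writing $\bar X = X - \mathbb E X$, and for any $a \in (0, 1/\alpha]$ apply Markov's inequality to $e^{a\bar X}$ to obtain $\Pr[\bar X > t] \le e^{-at}\,\mathbb E[e^{a\bar X}] \le \exp\{-at + \nu^2 a^2/2\}$. The second inequality here is exactly the defining sub-exponential condition $\mathbb E[e^{a\bar X}] \le \exp\{\nu^2 a^2/2\}$, which is valid precisely on the range $|a| \le 1/\alpha$; this constraint is what forces the two-regime structure of the final bound.

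The key step is to minimize the exponent $\phi(a) := -at + \nu^2 a^2/2$ over the constrained interval $a \in (0, 1/\alpha]$. The unconstrained minimizer is $a^\star = t/\nu^2$, and the analysis splits into two cases according to whether $a^\star$ is feasible. When $0 < t \le \nu^2/\alpha$ we have $a^\star \le 1/\alpha$, so $a^\star$ lies in the feasible interval and substituting gives $\phi(a^\star) = -t^2/(2\nu^2)$, yielding the first (Gaussian-type) bound $\exp\{-t^2/2\nu^2\}$. When $t > \nu^2/\alpha$ we instead have $a^\star > 1/\alpha$; since $\phi$ is strictly decreasing on $(0, a^\star)$, the constrained minimum is attained at the boundary $a = 1/\alpha$, giving $\phi(1/\alpha) = -t/\alpha + \nu^2/(2\alpha^2)$. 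Using $t > \nu^2/\alpha$ to bound $\nu^2/(2\alpha^2) < t/(2\alpha)$, this exponent is at most $-t/(2\alpha)$, yielding the second (exponential-type) bound $\exp\{-t/2\alpha\}$.

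Finally I would treat the lower tail by applying the identical argument to $-\bar X$: because the defining inequality $\mathbb E[e^{a\bar X}] \le \exp\{\nu^2 a^2/2\}$ holds for \emph{all} $|a| \le 1/\alpha$, in particular for negative $a$, the variable $-X$ is itself sub-exponential with the same parameters $(\nu,\alpha)$, so $\Pr[\bar X < -t]$ obeys the same one-sided bounds. A union bound over the two tails then produces the factor of $2$ in the statement. I do not anticipate a deep obstacle; the only point requiring genuine care is the constrained optimization — specifically, confirming that the boundary value $a = 1/\alpha$ is optimal in the regime $t > \nu^2/\alpha$ and that the resulting exponent cleanly collapses to $-t/(2\alpha)$ via the assumed inequality on $t$.
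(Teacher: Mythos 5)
Your proof is correct: the Chernoff bound combined with the constrained minimization of $-at+\nu^2a^2/2$ over $a\in(0,1/\alpha]$, the two-case split at $t=\nu^2/\alpha$, and the union bound over the two tails is exactly the canonical argument for this inequality. The paper itself states this lemma as a standard fact and gives no proof, so there is nothing to compare against; your argument is the standard one it implicitly relies on, and all steps (including the bound $\nu^2/(2\alpha^2)<t/(2\alpha)$ in the second regime and the observation that the defining MGF condition holds for negative $a$ as well) check out.
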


The following lemma is a simplified version of Theorem 1.2A in \citep{victor1999general} (note that the original form in \citep{victor1999general} is
one-sided; the two-sided version below can be trivially obtained by considering $-X_1,\ldots,-X_n$ and applying the union bound).
\begin{lemma}[Bernstein's inequality for martingales]
Suppose $X_1,\ldots,X_n$ are random variables such that $\mathbb E[X_j|X_1,\ldots,X_{j-1}]=0$ 
and $\mathbb E[X_j^2|X_1,\ldots,X_{j-1}]\leq \sigma^2$ for all $t=1,\ldots,n$.
Further assume that $\mathbb E[|X_j|^k|X_1,\ldots,X_{j-1}] \leq \frac{1}{2}k!\sigma^2b^{k-2}$ for all integers $k\geq 3$.
Then for all $t>0$, 
$$
\Pr\left[\bigg|\sum_{j=1}^n{X_j}\bigg| \geq t\right] \leq 2\exp\left\{-\frac{t^2}{2(n\sigma^2+bt)}\right\}.
$$
\label{lem:martingale-bernstein}
\end{lemma}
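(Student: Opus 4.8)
The plan is to follow the classical Chernoff / exponential-supermartingale argument: reduce the two-sided tail to a one-sided one, control the conditional moment generating function (MGF) of each martingale difference using the Bernstein moment condition, and then iterate through the filtration. Write $\mathcal F_{j-1}$ for the $\sigma$-algebra generated by $X_1,\ldots,X_{j-1}$. The first step is to bound, for every $0<\lambda<1/b$, the conditional MGF
$$
\mathbb E\!\left[e^{\lambda X_j}\mid\mathcal F_{j-1}\right]
= 1 + \sum_{k\geq 2}\frac{\lambda^k}{k!}\,\mathbb E\!\left[X_j^k\mid\mathcal F_{j-1}\right],
$$
where the $k=1$ term vanishes because $\mathbb E[X_j\mid\mathcal F_{j-1}]=0$. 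Using $|\mathbb E[X_j^k\mid\mathcal F_{j-1}]|\leq\mathbb E[|X_j|^k\mid\mathcal F_{j-1}]\leq\tfrac12 k!\,\sigma^2 b^{k-2}$ (which also covers $k=2$ via the variance hypothesis, since $\tfrac12\cdot 2!\,\sigma^2 b^0=\sigma^2$), each summand is at most $\tfrac12\sigma^2\lambda^2(\lambda b)^{k-2}$, and summing the resulting geometric series yields
$$
\mathbb E\!\left[e^{\lambda X_j}\mid\mathcal F_{j-1}\right]
\leq 1+\frac{\sigma^2\lambda^2}{2(1-\lambda b)}
\leq \exp\!\left(\frac{\sigma^2\lambda^2}{2(1-\lambda b)}\right).
$$

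Next I would iterate this bound through the filtration using the tower property. Writing $S_n=\sum_{j=1}^n X_j$ and conditioning on $\mathcal F_{n-1}$, the factor $\mathbb E[e^{\lambda X_n}\mid\mathcal F_{n-1}]$ is dominated by the deterministic quantity above, so it pulls out of the expectation; repeating this $n$ times gives
$$
\mathbb E\!\left[e^{\lambda S_n}\right]\leq\exp\!\left(\frac{n\sigma^2\lambda^2}{2(1-\lambda b)}\right).
$$
A Markov / Chernoff step then produces, for every $0<\lambda<1/b$,
$$
\Pr\!\left[S_n\geq t\right]\leq\exp\!\left(-\lambda t+\frac{n\sigma^2\lambda^2}{2(1-\lambda b)}\right).
$$

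The final step is to optimize the exponent over $\lambda$. Rather than differentiating exactly, I would substitute the standard near-optimal choice $\lambda=t/(n\sigma^2+bt)$, which satisfies $\lambda b<1$ and gives $1-\lambda b=n\sigma^2/(n\sigma^2+bt)$; a direct substitution collapses the exponent to exactly $-t^2/\bigl(2(n\sigma^2+bt)\bigr)$, yielding the one-sided bound $\Pr[S_n\geq t]\leq\exp\{-t^2/(2(n\sigma^2+bt))\}$. Finally, since $-X_1,\ldots,-X_n$ satisfy the same conditional mean, variance, and absolute-moment hypotheses (as $|{-X_j}|^k=|X_j|^k$), the identical bound holds for $\Pr[-S_n\geq t]$, and a union bound over the two tails produces the stated factor of $2$. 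The one delicate point is the conditional-MGF step: I expect the main care to lie in confirming that the Bernstein moment condition is exactly calibrated so the power series converges to a geometric tail for $\lambda<1/b$, and that discarding the $k=1$ term via the martingale-difference property is precisely what makes the leading exponent quadratic rather than linear in $\lambda$; the remaining algebra is routine.
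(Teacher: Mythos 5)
Your proof is correct, but it takes a genuinely different route from the paper: the paper never proves this lemma at all. It is stated as a simplified two-sided version of Theorem 1.2A in the cited work of de la Pe\~{n}a (1999), and the only argument the paper supplies is the reduction you also use at the very end — apply the one-sided bound to $-X_1,\ldots,-X_n$ and take a union bound. Your write-up additionally establishes the one-sided inequality itself, via the classical Bernstein--Chernoff argument: the moment condition (with the $k=2$ case correctly calibrated against the variance hypothesis, since $\tfrac12\cdot 2!\,\sigma^2 b^0=\sigma^2$) gives the conditional MGF bound $\mathbb E[e^{\lambda X_j}\mid\mathcal F_{j-1}]\leq\exp\{\sigma^2\lambda^2/(2(1-\lambda b))\}$ for $0<\lambda<1/b$; the tower property telescopes this through the filtration; and the choice $\lambda=t/(n\sigma^2+bt)$ indeed satisfies $\lambda b<1$ and collapses the Chernoff exponent exactly to $-t^2/(2(n\sigma^2+bt))$ — I verified the algebra. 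The only step you leave implicit is the interchange of conditional expectation with the power series of $e^{\lambda X_j}$, which is justified by exactly the geometric bound you derive (the series of conditional absolute moments converges for $\lambda<1/b$, so Fubini--Tonelli applies); this is standard and not a gap. What your approach buys is self-containedness: the lemma reduces to elementary calculus rather than an external reference. What the citation buys the paper is generality and brevity — de la Pe\~{n}a's theorem covers random conditional variances and self-normalized statements, of which this lemma is the special case with deterministic bounds $\sigma^2$ and $b$.
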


\end{document}